\theoremstyle{plain}
\newtheorem{theorem}{Theorem}[section]
\newtheorem{lemma}[theorem]{Lemma}
\theoremstyle{definition}
\theoremstyle{remark}
\newtheorem{remark}[theorem]{Remark}
\newcommand{\mdp}{M}
\newcommand{\rmdp}{\mathcal{M}}
\newcommand{\states}{\mathcal{S}}
\newcommand{\actions}{\mathcal{A}}
\newcommand{\init}{\mathit{init}}
\newcommand{\sinit}{s_\init}
\newcommand{\trans}{\delta}
\newcommand{\agentpol}{\sigma}
\newcommand{\envpol}{\rho}
\newcommand{\uncert}{\mathcal{P}}
\newcommand{\run}{\pi}
\newcommand{\runs}{\Pi}
\newcommand{\prob}{\mathbb{P}}
\newcommand{\obj}{\mathcal{O}}
\newcommand{\reach}{\mathit{Reach}}
\newcommand{\parity}{\mathit{Parity}}
\newcommand{\buchi}{\mathit{B\text{\textit{\"u}}chi}}
\newcommand{\coparity}{\overline{\mathit{Parity}}}
\newcommand{\N}{\mathbb{N}}
\newcommand{\R}{\mathbb{R}}
\newcommand{\val}{\mathit{Val}}
\newcommand{\force}{\mathit{force}}
\newcommand{\true}{\mathit{true}}
\newcommand{\false}{\mathit{false}}
\newcommand{\agent}{\mathscr{A}}
\newcommand{\env}{\mathscr{E}}
\newcommand{\PA}{\mathit{PAttr}}
\newcommand{\argmax}{\mathit{argmax}}
\newcommand{\asreach}{\mathit{ASReach}}
\newcommand{\asparity}{\mathit{ASParity}}
\newcommand{\easparity}{\mathit{EffASParity}}
\newcommand{\size}{\mathit{MS}}
\newcommand{\supp}{\text{supp}}
\icmltitlerunning{Qualitative Analysis of $\omega$-Regular Objectives on Robust MDPs}
\begin{document}

\twocolumn[
\icmltitle{Qualitative Analysis of $\omega$-Regular Objectives on Robust MDPs}




\begin{icmlauthorlist}
\icmlauthor{Ali Asadi}{ist}
\icmlauthor{Krishnendu Chatterjee}{ist}
\icmlauthor{Ehsan Kafshdar Goharshady}{ist}
\icmlauthor{Mehrdad Karrabi}{ist}
\icmlauthor{Ali Shafiee}{sharif}
\end{icmlauthorlist}

\icmlaffiliation{ist}{Institute of Science and Technology Austria (ISTA), Klosterneuburg, Austria}
\icmlaffiliation{sharif}{Sharif University of Technology, Tehran, Iran}

\icmlcorrespondingauthor{Ali Asadi}{aasadi@ista.ac.at}
\icmlcorrespondingauthor{Ehsan Kafshdar Goharshady}{egoharsh@ista.ac.at}
\icmlcorrespondingauthor{Mehrdad Karrabi}{mkarrabi@ista.ac.at}

\icmlkeywords{Machine Learning, ICML}

\vskip 0.3in
]



\printAffiliationsAndNotice{}  

\begin{abstract}
Robust Markov Decision Processes (RMDPs) generalize classical MDPs that consider uncertainties in transition probabilities by defining a set of possible transition functions. An objective is a set of runs (or infinite trajectories) of the RMDP, and the value for an objective is the maximal probability that the agent can guarantee against the adversarial environment. We consider (a)~reachability objectives, where given a target set of states, the goal is to eventually arrive at one of them; and (b)~parity objectives, which are a canonical representation for $\omega$-regular objectives. The qualitative analysis problem asks whether the objective can be ensured with probability 1. 

In this work, we study the qualitative problem for reachability and parity objectives on RMDPs without making any assumption over the structures of the RMDPs, e.g., unichain or aperiodic. Our contributions are twofold. We first present efficient algorithms with oracle access to uncertainty sets that solve qualitative problems of reachability and parity objectives. We then report experimental results demonstrating the effectiveness of our oracle-based approach on classical RMDP examples from the literature scaling up to thousands of states.
\end{abstract}

\section{Introduction} \label{sec:intro}

\paragraph{Robust Markov Decision Processes.}
Markov Decision Processes (MDPs) are the main framework for reasoning, decision making and planning under uncertainty \cite{Puterman94}. Solving an MDP focuses on finding a policy for the agent such that a specific trajectory-related (temporal) objective is satisfied with high probability, or the expectation of a reward function is maximized. However, the stochasticity in MDPs is usually estimated by sampling trajectories from the real-world model, resulting in that the exact transition probabilities are not known \cite{Walley96,WangZ21,WangZ22c}. Classical MDP analysis algorithms do not consider such uncertainties. To this end, Robust MDPs (RMDPs) are introduced, by considering a restricted set of transition functions and assuming that the original transition function is included~\cite{NilimG03, Iyengar05}. Then the main goal of RMDP analysis is to find an agent policy that maximizes the worst-case probability of a temporal property satisfaction or the worst-case expected reward with respect to all possible choices of transition function from the uncertainty set. This way, a level of {\em robustness} is guaranteed for the proposed policy. 

\paragraph{Quantitative vs Logical Objectives.} 
Most of the previous works on RMDPs, consider discounted sum or long-run average maximization objectives \cite{ChatterjeeGK0Z24,HoPW21,meggendorfer2024,WangVAPZ23,NilimG03,WangHP23}. These objectives are referred to as {\em quantitative} objectives since they provide performance-related guarantees on the model. However, such objectives do not consider the logical correctness of the policies which is crucial in safety-critical applications. For example, in autonomous driving, although it is important to minimize the amount of energy consumed by the vehicle, it is more important to avoid collisions (a logical safety property). For such correctness properties, the goal is to obtain policies that ensure that the property is satisfied with high probability. Logical frameworks such as linear temporal logic (LTL), or the more general class of $\omega$-regular objectives are used to model these properties. A canonical way of indicating $\omega$-regular properties is through parity objectives which is a sound framework for expressing reachability, safety, and progress conditions on finite-state models \cite{handbookmc2018}.

\paragraph{Quantitative vs Qualitative Analysis.}
Quantitative analysis ensures that the probability of satisfying an objective is above a given threshold $\lambda < 1$, whereas qualitative analysis focuses on ensuring the desired objective is satisfied with probability 1. The qualitative analysis is of great importance as in several applications where it is required that the correct behavior happens with probability 1. For example, in the analysis of randomized embedded schedulers, an important problem 
is whether every thread progresses with probability 1~\cite{Baruah92,Chatterjee13}. On the other hand, even in applications where it might be sufficient that the correct behavior happens with probability at least $\lambda < 1$,
choosing an appropriate threshold $\lambda$ can be challenging, due to modeling simplifications and imprecisions, e.g., in the analysis of randomized distributed algorithms it is common to ensure correctness with probability 1~\cite{Pogosyants00}. Besides its importance in practical applications, almost-sure convergence, like convergence in expectation, is a fundamental concept in probability theory, and provides the strongest probabilistic guarantee~\cite{Durrett19}.

\paragraph{Contributions.}
The above motivates the study of RMDPs with parity objectives and their qualitative analysis, which is the focus of this work. Along with parity we also consider the important special case of reachability objectives. Reachability refers to reaching a set of target states, which is a basic and fundamental objective. To this end, our contributions can be summarized as follows:
\begin{compactenum}
    \item We present algorithms based on specific oracle access to the uncertainty set (see Section \ref{sec:algo}) for the qualitative analysis of reachability and parity objectives. We show how these oracles are computed efficiently for well-known uncertainty set classes (see Appendix~\ref{app:oracles}).  
    \item We implemented a prototype of our algorithms and conducted experiments showing their applicability and efficiency on benchmarks motivated by the literature. We compare the performance of our reachability algorithm with the state-of-the-art long-run average approach of \cite{ChatterjeeGK0Z24} (see Section \ref{sec:experiments}). 
\end{compactenum}
Our approach is inspired by algorithms from the stochastic games literature \cite{ChatterjeeH12}, where both players have finite action space. However, extending the methods from stochastic games to RMDPs where the environment's action space is infinite is a challenge that we overcome in our work. 

\paragraph{Technical Novelty.} We discuss the technical novelties of our work as follows:
\begin{compactenum}
    \item In contrast to previous works that consider special classes of RMDPs, e.g. unichain and aperiodic \cite{WangHP23}, our approach does not assume any structural property on the model.
    \item Even in RMDPs with special classes of uncertainty, e.g. polytopic, $L_1$, or $L_\infty$, the reduction of \cite{ChatterjeeGK0Z24} to stochastic games leads to an exponential blow-up of the state space. This results in an exponential-time algorithm for analysis of reachability and parity objectives. In contrast, our approach runs in polynomial time for reachability and parity objectives with constantly many priorities, and quasi-polynomial for the general case of parity objectives. 
\end{compactenum}

\paragraph{Related Works on Robust MDPs.} Robust MDPs have been studied extensively in recent years. However, most of the literature on RMDPs considers discounted sum objectives~\cite{HoPW21,WangVAPZ23,HoPW22,BehzadianPH21} where the goal is to find a policy that maximizes the worst-case expected sum of discounted rewards seen during a run of the RMDP. Such objectives gain their motivation from classic reinforcement learning schemes \cite{RLSutton}. More recently, the community is considering long-run average (a.k.a. mean-payoff) objectives \cite{ChatterjeeGK0Z24,WangVAPZ24,Grand-ClementP23}. It is noteworthy that given an RMDP, the qualitative analysis of reachability objectives can be reduced to finding the long-run average value of an RMDP of the same size. Several works have considered specific classes of RMDPs, e.g. interval MDPs \cite{ChatterjeeSH08,TewariB07}, RMDPs with convex uncertainties \cite{clementPV23,PuggelliLSS13}, Markov Chains with uncertainties \cite{SenVA06} and parametric MDPs \cite{WinklerJPK19}. See \cite{Suilen2025} for a comprehensive survey.

\paragraph{Related Works on $\omega$-regular and Qualitative Analysis.}
$\omega$-regular and other logical objectives have been studied extensively in the context of reinforcement learning and planning \cite{HasanbeigKAKPL19,JothimuruganBBA21,SvobodaBC24,HauDGP23}. 
For parity objectives, Zielonka's algorithm \cite{Zielonka98} finds the winning set of a non-stochastic two-player parity game. Later on \cite{Parys19} modified Zielonka's approach so that it runs in quasi-polynomial runtime. In \cite{ChatterjeeJH03} authors show a reduction from stochastic parity games to non-stochastic parity games. \cite{ChatterjeeH06} proposes a randomized sub-exponential algorithm for solving stochastic parity games. \cite{ChatterjeeH12} gives a survey on analysis of $\omega$-regular objectives over two-player stochastic games. 
\section{Model Definition and Preliminaries} \label{sec:prelims}
For the rest of this paper, we denote the set of all probability distributions defined over the finite set $S$ by $\Delta(S)$. Moreover, given a distribution $P$ over  $S$, we denote the support of $P$ by $\supp(P)$, the probability of sampling $B \subseteq S$ from $P$ by $P[B]$, and if $P[B]=1$, the restriction of $P$ to $B$ by $P_{|B}$. Finally, we denote the set of natural numbers as $\N$, Real numbers by $\R$ and the power set of a set $S$ as $2^S$.

\paragraph{MDPs.} A Markov Decision Process \cite{Puterman94} is defined as $\mdp = (\states, \actions, \trans)$, where $\states$ is a finite set of states, $\actions$ is a finite set of actions, and $\trans\colon \states \times \actions \rightharpoonup\Delta(\states)$ is a (partial) stochastic transition relation. We denote by $\actions(s)$ the {\em available} actions at state $s$.

The semantics of $\mdp$ are defined with respect to a policy $\agentpol\colon (\states \times \actions)^* \times \states \to \Delta(\actions)$. Given a policy $\agentpol$ for $\mdp$, the MDP starts at state $s_0$ and in the $i$-th turn, based on the {\em history} $h_i=s_0, a_0, \dots s_{i-1},a_{i-1}$ and the current state $s_i$, it samples an action $a_i$ from distribution $\agentpol(h_i,s_i)$ (over available actions $\actions(s_i)$) and evolves to state $s_{i+1}$ with probability $\trans(s_i,a_i)[s_{i+1}]$. 

\paragraph{Robust MDPs.} Robust Markov Decision Processes (RMDPs) \cite{NilimG03} extend MDPs by considering uncertainty in the transition relation. Formally, an RMDP $\rmdp$ is a tuple $(\states, \actions, \uncert)$ where $\states$ and $\actions$ are the same as in the definition of MDPs, and $\uncert \colon \states \times \actions \rightharpoonup 2^{\Delta(\states)}$ is a (partial) {\em uncertainty set}. An entry $\uncert(s,a)$ of the uncertainty set can be characterized by constraints over real-valued variables $x_1, \dots, x_{|\states|}$ representing entries of distributions over $\states$. We assume that $\uncert(s,a)$ is compact and computable for every $s,a$. Later on, in section~\ref{sec:algo}, we assume a certain kind of oracle is provided for $\uncert$.

The semantics of RMDPs are defined with respect to two policies: (i) an agent policy $\agentpol \colon (\states \times \actions)^* \times \states \to \Delta(\actions)$, and (ii) an adversarial policy $\envpol \colon (\states \times \actions)^* \times \states \times \actions \to \Delta(\states)$  for the environment, where for each $h \in (\states \times \actions)^*$, $s \in \states$ and $a \in \actions(s)$, it must hold that $\envpol(h,s,a) \in \uncert(s,a)$. Given $\agentpol$ and $\envpol$, the RMDP starts at $s_0$ and in the $i$-th step, given the history $h_i = s_0, a_0, \dots, s_{i-1}, a_{i-1}$ and the current state $s_i$, the agent samples an action $a_i$ from $\agentpol(h_i,s_i)$, then the RMDP evolves to state $s_{i+1}$ with probability $\envpol(h,s_i,a_i)[s_{i+1}]$. Intuitively, at each turn, the environment observes the action chosen by the agent and chooses a transition distribution from the uncertainty set for the RMDP to evolve accordingly. The resulting sequence $\run = s_0,a_0, s_1,a_1, \dots$ is called a {\em run} of $\rmdp$. We denote the set of all runs of $\rmdp$ by $\runs_\rmdp$.

We denote by $\prob^{\agentpol,\envpol}_{\rmdp}(s_0)$ the probability measure produced by the policies $\agentpol, \envpol$ over the runs of $\rmdp$ that start at $s_0$. This probability space is generated using cylinder sets, the details are standard and we refer the interested reader to \cite{Baier2008} for more detailed discussions.

\paragraph{Operations over RMDPs.} Given a set of states $B \subseteq \states$, the RMDP $\rmdp \setminus B$ is defined as a tuple $(\states \setminus B, \actions', \uncert')$, where (i) for every $s \in \states \setminus B$, available actions $\actions'(s)$ only contains those actions from $\actions(s)$ that cannot cause $B$ being reached in one step, i.e., $\actions'(s) = \{a \in \actions(s) \mid \forall \trans \in \uncert(s,a) \colon \trans[B] = 0 \}$, and (ii) $\uncert'$ is the same as $\uncert$ but restricted to actions in $\actions'$ and projected on $\states \setminus B$. Furthermore, the RMDP $\rmdp_{|B}$ is defined as a tuple $(B, \actions', \uncert'$), where (i) for every $s \in B$, available actions $\actions'(s)$ only contains those actions that can reach $B$ in one step with probability 1, i.e., $\actions'(s) = \{a \in \actions(s) \mid \exists \trans \in \uncert(s,a) \colon \trans[B] = 1 \}$, and (ii) $\uncert'(s,a) = \{\trans_{|B} \mid \trans \in \uncert(s,a) \wedge \trans[B]=1\}$.

\paragraph{Policy Types.} An agent policy $\agentpol$ is called {\em memoryless} (or {\em stationary}) if it does not depend on the history, i.e. if $\agentpol(h_1,s) = \agentpol(h_2,s)$ for every $h_1,h_2 \in (\states \times \actions)^*$. Moreover, $\agentpol$ is {\em pure} if it is deterministic, i.e. for each history $h \in (\states \times \actions)^*$ and each $s \in \states$, it holds that $|\sup(\agentpol(h,s))|=1$. Definitions of memoryless and deterministic environment policies are analogous.

\paragraph{Remark.} The RMDPs considered in this work are referred to as {\em $(s,a)$-rectangular} RMDPs since the environment sees both the current state and the action chosen by the agent before choosing the transition distribution. 

\paragraph{Objectives and Values.} Given an RMDP $\rmdp$, an objective $\obj$ is a set of runs of $\rmdp$. For the rest of this paper, we consider two types of objectives:
\begin{compactenum}
    \item (Reachability) Given a set $T \subseteq \states$, the reachability objective $\reach(T)$ is defined as the set of all runs that eventually reach a state in $T$: 
    \[
    \reach(T) = \{s_0,a_0, s_1,a_1, \dots \in \runs_\rmdp | \exists i: s_i \in T \}
    \]
    \item (Parity) Let $c\colon \states \to \{0, 1, \ldots, d\}$ be a function assigning priorities to states of $\rmdp$. For every run $\run \in \runs_\rmdp$, let $\inf(\pi)$ be the set of states visited infinitely often in $\run$. Then a run $\run$ is included in $\parity(c)$, if the maximum priority visited infinitely often in $\run$ is {\em even}:
    \[
    \parity(c) = \{\run \in \runs_\rmdp | \max\left\{c\left(\inf(\pi)\right)\right\} \text{ is even}\}.
    \]
\end{compactenum}

Note that the environment's version of the parity objective can be defined similarly, by replacing {\em even} with {\em odd}. 

The value of an agent policy $\agentpol$ with respect to an objective~$\obj$, given an initial state $s_0$ is defined as 
\[\val^{s_0,\agentpol}_\rmdp(\obj) = \inf_{\envpol} \prob^{\agentpol,\envpol}_\rmdp(s_0)[\obj]\]
Generally, the agent's goal is to find a policy that maximizes the value with respect to a specific objective $\obj$. To this end, the value of the RMDP $\rmdp$ given an initial state $s_0$ is defined as follows:
\[
\val^{s_0}_\rmdp(\obj) = \sup_{\agentpol} \val^{s_0,\agentpol}_\rmdp(\obj) = \sup_{\agentpol} \inf_{\envpol} \prob^{\agentpol,\envpol}_\rmdp(s_0)[\obj]
\]

\paragraph{Problem Statement.} In the rest of this paper, we develop algorithms for the following two problems, given an RMDP $\rmdp = (\states, \actions, \uncert)$, an initial state $s_0 \in \states$ and a reachability or parity objective $\obj$:
\begin{compactenum}
    \item (Limit-Sure Analysis) Decide whether $\val^{s_0}_\rmdp(\obj) = 1$.
    \item (Almost-Sure Analysis) Decide whether there exists an agent policy $\agentpol$ that guarantees $\val^{s_0,\agentpol}_\rmdp(\obj) = 1$.
\end{compactenum}

A positive answer to the almost-sure analysis problem implies a positive answer to the limit-sure analysis problem, however, the reverse does not hold in general. We will show that for RMDPs with parity objectives, the answers to the limit-sure and almost-sure analysis coincide.


\section{Algorithm} \label{sec:algo}
In this section, we establish our algorithms for proving almost-sure reachability and parity objectives given an RMDP $\rmdp$. To this end, we first introduce two oracle functions $\force^\rmdp_\agent$ and $\force^\rmdp_\env$, both of type $\states \times 2^\states \to \{\true,\false\}$, which we assume are provided together with the uncertainty set. 

\subsection{Oracles}
Given an RMDP $\rmdp = (\states, \actions, \uncert)$ we assume that two Boolean functions (called {\em oracles}) $\force^\rmdp_\agent,\force^\rmdp_\env\colon \states \times 2^\states \to \{\true,\false\}$ are given as part of the model description. Informally speaking, $\force^\rmdp_\agent(s,B)$ for $s\in \states$ and $B \subseteq \states$ indicates whether when the RMDP is at state $s$, the agent can force $\rmdp$ to reach $B$ with a non-zero probability in the next step, i.e., whether she can choose an action $a \in \actions(s)$ such that the set $B$ would be reached in the next step with a non-zero probability, no matter what transition function the environment chooses from $\uncert(s,a)$. 
\[
\force^\rmdp_\agent(s,B) = \begin{cases}
    \true & \exists a \in \actions. \forall \trans \in \uncert(s,a). \trans[B]>0 \\
    \false & \textit{otherwise}
\end{cases}
\]
Similarly, $\force^\rmdp_\env(s,B)$ indicates whether whenever the RMDP is at state $s$, despite the action chosen by the agent, the environment can choose a transition function so that $B$ is reached with a non-zero probability. 
\[
\force^\rmdp_\env(s,B) = \begin{cases}
    \true & \forall a \in \actions .\exists \trans \in \uncert(s,a). \trans[B]>0 \\
    \false & \textit{otherwise}
\end{cases}
\]

\begin{figure}
    \centering
    \includegraphics[width=\linewidth,trim={0 0 0 0},clip]{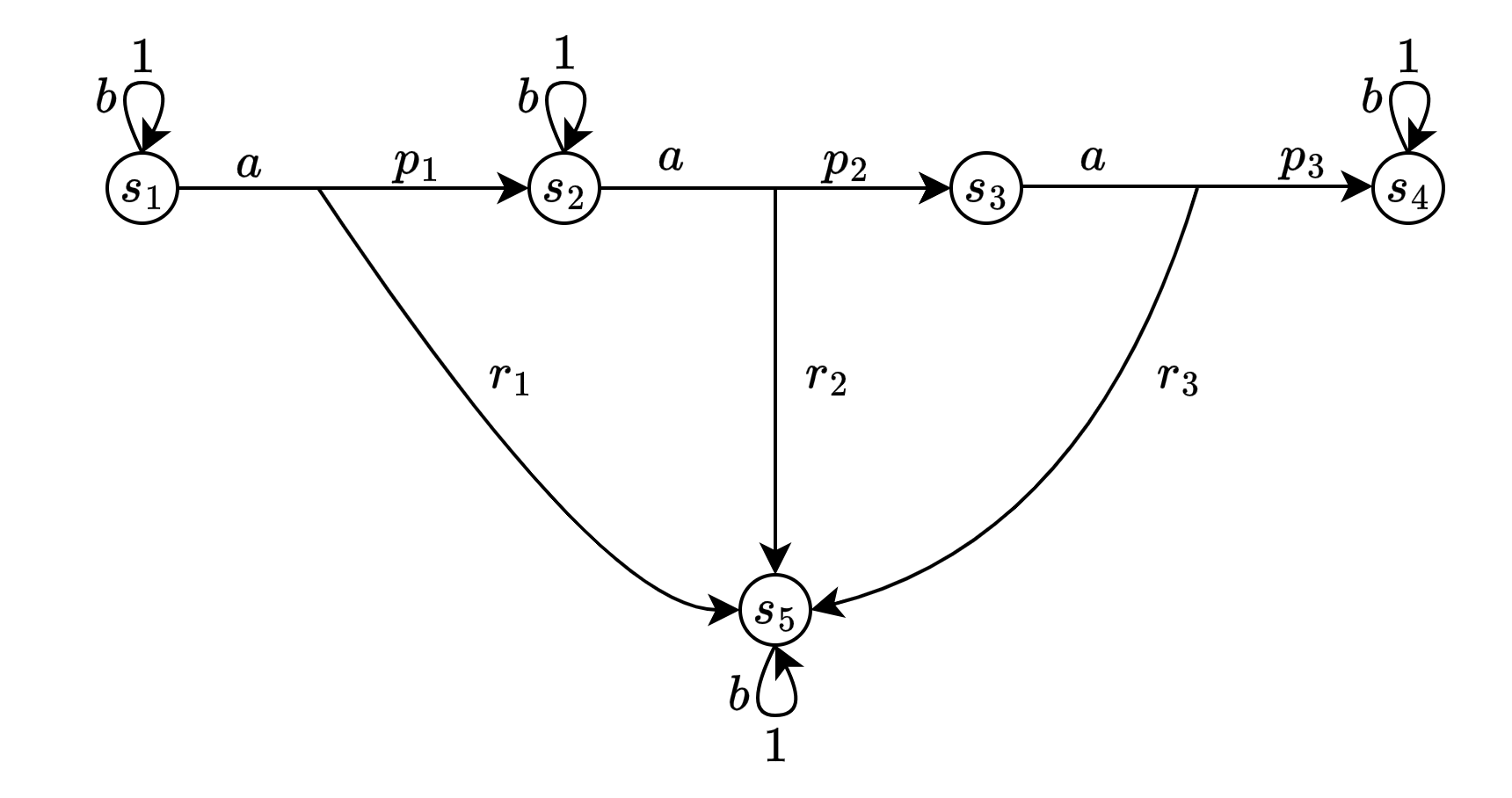}
    \vspace{-0.5cm}
    \caption{Running example for our algorithms.}
    \label{fig:reach-example}
\end{figure}

\paragraph{Example.} Consider the RMDP in Figure \ref{fig:reach-example} where action $b$ can only be applied to $s_1, s_2$, $s_4$, and $s_5$ without uncertainty, i.e., the RMDP stays in the same state when action $b$ is taken with probability 1. On the other hand, we have the following uncertainties for action $a$ over $s_1,s_2$ and $s_3$:
\[
\begin{split}
    \uncert(s_i,a) = \{ \uncert \in \Delta(\states) \mid &~ p_{i}+r_{i}=1 \\
    \wedge & \| (p_{i},r_{i}) - (\frac{1}{2},\frac{1}{2}) \|_2 \leq 0.2\}
\end{split}
\]
where $p_i$ is the probability of transitioning to $s_{i+1}$ and $r_i$ is the probability of reaching $s_5$. Then, $\force^\rmdp_\agent(s_1,\{s_1\})$, $\force^\rmdp_\agent(s_2,\{s_5\})$ and $\force^\rmdp_\env(s_3,\{s_4\})$ are all $\true$, while $\force^\rmdp_\env(s_1,\{s_5\})$ and $\force^\rmdp_\env(s_2,\{s_3\})$ are $\false$.

\paragraph{Practicality of Oracles.} Efficient (PTIME computable) oracles can be designed for real-world uncertainty set classes such as $L_1, L_2, \dots, L_\infty$, and linearly defined (polytopic) uncertainty sets. In Appendix \ref{app:oracles} we show efficient oracles for each of these classes of uncertainty sets. 

In what follows, we assume such oracles are given as part of the model presentation for $\rmdp$ and its subRMDPs with respect to the operations described in Section \ref{sec:prelims}. We provide algorithms for proving almost-sure reachability and parity specifications given access to such oracles and will then analyze them by the number of oracle calls that they make.


\paragraph{Positive Attractors.} By utilizing the oracle $\force^\rmdp_\agent$, we define the {\em Positive Attractor} procedure $\PA_\agent$ for the agent as in Algorithm \ref{alg:pattr-agent}. The key idea is that $T_i$ contains all the states where the agent can reach $T$ from them in less than or equal to $i$ steps, with a positive probability. The analogous positive attractor $\PA_\env$ for the environment is defined by replacing $\force^\rmdp_\agent$ with $\force^\rmdp_\env$ in Algorithm \ref{alg:pattr-agent} (See Appendix~\ref{app:pattre}). The following lemma, proved in Appendix \ref{app:lemm:attr}, characterizes the essential properties of $\PA_\agent(\rmdp,T)$ and $\PA_\env(\rmdp,T)$ for any $T \subseteq \states$:

\begin{restatable}{lemma}{lemmapattr}
  \label{lem:attr}
    Given an RMDP $\rmdp = (\states, \actions, \uncert)$ with oracles $\force^\rmdp_\agent$ and $\force^\rmdp_\env$ and a target set $T \subseteq \states$, the following assertions holds:
    \begin{compactenum}
        \item (Correctness of $\PA_\agent$) $s \in \PA_\agent(\rmdp,T)$ iff there exists a pure memoryless agent policy~$\agentpol$ such that for all environment policies $\envpol$ we have $\prob^{\agentpol,\envpol}_{\rmdp}(s)[\reach(T)]>0$; 
        \item (Correctness of $\PA_\env$) $s \in \PA_\env(\rmdp,T)$ iff there exists a pure memoryless environment policy~$\envpol$ such that for all agent policies $\agentpol$, we have $\prob^{\agentpol,\envpol}_{\rmdp}(s)[\reach(T)]>0$; and
        \item (Oracle Complexity) $\PA_\agent(\rmdp,T)$ and $\PA_\env(\rmdp,T)$ always terminate with $O(|\states|^2)$ oracle calls.
    \end{compactenum}
\end{restatable}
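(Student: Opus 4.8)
The plan is to first fix notation for the attractor iteration: write $T_0 = T$ and $T_{i+1} = T_i \cup \{s \in \states \mid \force^\rmdp_\agent(s, T_i) = \true\}$, so that $\PA_\agent(\rmdp, T) = \bigcup_i T_i$ is the fixpoint of this monotone sequence (and symmetrically for $\PA_\env$ using $\force^\rmdp_\env$). For each $s \in \PA_\agent(\rmdp,T)$ I define its \emph{rank} as the least $i$ with $s \in T_i$. The three assertions are proved independently: parts 1 and 2 each split into the two directions of the iff, and part 3 follows from the monotonicity of the iteration.

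For the forward direction of part 1 I would construct a pure memoryless witness policy $\agentpol$ directly from the iteration: for every $s$ of rank $i \ge 1$, the condition $\force^\rmdp_\agent(s, T_{i-1}) = \true$ supplies an action $a_s$ with $\trans[T_{i-1}] > 0$ for \emph{all} $\trans \in \uncert(s, a_s)$, and I set $\agentpol(s) = a_s$. I then argue by induction on the rank that from any $s$ the run reaches $T$ within $\mathrm{rank}(s)$ steps with positive probability against every $\envpol$. The crucial quantitative point is to lower-bound this probability uniformly: for the chosen action $a_s$ the environment may still pick any $\trans \in \uncert(s, a_s)$, so I need $\inf_{\trans \in \uncert(s,a_s)} \trans[T_{i-1}] > 0$. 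This is exactly where the compactness assumption on $\uncert(s,a_s)$ enters --- since $\trans \mapsto \trans[T_{i-1}]$ is linear, hence continuous, and strictly positive on a compact set, its infimum is attained and positive. Taking $p_{\min}$ to be the minimum of these finitely many positive infima over all attractor states yields $\prob^{\agentpol,\envpol}_\rmdp(s)[\reach(T)] \ge p_{\min}^{\mathrm{rank}(s)} > 0$.

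For the backward direction of part 1 I would use a trap argument. Writing $W = \states \setminus \PA_\agent(\rmdp, T)$, the fixpoint property gives $\force^\rmdp_\agent(s, \PA_\agent(\rmdp,T)) = \false$ for every $s \in W$; unfolding the definition, for every $a \in \actions(s)$ there is some $\trans \in \uncert(s,a)$ with $\trans[\PA_\agent(\rmdp,T)] = 0$, i.e. $\trans[W] = 1$. This defines a pure memoryless environment policy that, responding to whatever action the agent plays, keeps all probability mass inside $W$. Since $T \subseteq \PA_\agent(\rmdp,T)$ is disjoint from $W$, any run started in $W$ stays in $W$ forever and never meets $T$, so $\prob^{\agentpol,\envpol}_\rmdp(s)[\reach(T)] = 0$ for every $\agentpol$, proving the contrapositive. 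Part 2 is entirely symmetric: the forward direction builds a pure memoryless environment policy by choosing, for each state--action pair $(s,a)$, a distribution $\trans_{s,a} \in \uncert(s,a)$ with $\trans_{s,a}[T_{i-1}] > 0$ (here no compactness is needed, since only finitely many pairs are fixed, so the relevant minimum of positive numbers is automatically positive), while the backward direction traps the agent inside $\states \setminus \PA_\env(\rmdp,T)$ using the single action guaranteed by $\force^\rmdp_\env(s, \cdot) = \false$.

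Finally, for part 3 I would observe that $T_0 \subseteq T_1 \subseteq \cdots$ is strictly increasing until it stabilizes, so the fixpoint is reached within $|\states|$ rounds; each round makes at most one oracle call per state not yet in the attractor, giving at most $|\states|$ calls per round and $O(|\states|^2)$ in total, with termination guaranteed by the bound $|\PA_\agent(\rmdp,T)| \le |\states|$. The main obstacle I anticipate is the forward direction of part 1: unlike in finite stochastic games, the environment's action set $\uncert(s,a)$ is infinite, so per-step positivity does not by itself yield a nonzero reachability probability; the compactness of the uncertainty sets is precisely what rescues the uniform lower bound $p_{\min} > 0$, and getting this quantitative step right --- rather than the purely combinatorial fixpoint reasoning --- is the heart of the argument.
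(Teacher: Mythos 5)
Your proposal is correct and follows essentially the same route as the paper's proof: a rank-based induction on the attractor iteration for the forward directions, with compactness of $\uncert(s,a)$ used to extract a uniform positive lower bound (the paper's $p_\agent$, your $p_{\min}$) yielding $\prob^{\agentpol,\envpol}_{\rmdp}(s)[\reach(T)] \ge p_{\min}^{\mathrm{rank}(s)}$, complement/trap arguments for the backward directions, and the same $|\states|$-rounds-times-$|\states|$-calls count for oracle complexity. Your one small refinement --- observing that the environment's forward direction in part 2 needs no compactness because only finitely many state--action witnesses $\trans_{s,a}$ are fixed --- is sound, whereas the paper invokes compactness there as well via an $\argmax$ choice; this is a cosmetic simplification, not a different argument.
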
 

In other words, $\PA_\agent(\rmdp,T)$ contains all states $s$ of $\rmdp$ where the agent has a policy to enforce reaching $T$ with a non-zero probability starting from $s$, despite the policy chosen by the environment. The set $\PA_\env(\rmdp,T)$ has the same meaning but for the environment. 

\begin{algorithm}[tb]
   \caption{$\PA_\agent(\rmdp,T)$ Positive Attractor for the agent}
   \label{alg:pattr-agent}
\begin{algorithmic}[1]
   \STATE {\bfseries Input:} RMDP $\rmdp$ and Target set $T$.
   \STATE Initialize $T_0 = T$, $i=0$.
   \REPEAT
   \STATE $i = i+1$
   \STATE $T_{i} = T_{i-1} \cup \{s \in \states | \force^\rmdp_\agent(s,T_{i-1})\}$
   \UNTIL{$T_i = T_{i-1}$}
   \STATE \textbf{return} $T_i$
\end{algorithmic}
\end{algorithm}

\subsection{Reachability}

This section introduces our algorithm for solving the almost-sure reachability problem in RMDPs. Algorithm \ref{alg:reach} shows the outline of our approach. The complete proofs are provided in Appendix~\ref{app:lem:reachisas,thm:reach}.

Given an RMDP $\rmdp$ with state-set $\states$, and a target set $T \subseteq \states$, the algorithm proceeds in steps, removing several states in each step. Each step first computes the set $B = \states \setminus \PA_\agent(\rmdp, T)$. These are states of the RMDP where the environment can employ a policy to never reach $T$ from them. If $B=\varnothing$, then no matter what policy the environment chooses, every state has a non-zero probability of reaching the target states. Lemma~\ref{lem:reachisas} shows that in such cases, the target $T$ can be reached almost-surely from every state of the RMDP.

\begin{restatable}{lemma}{lemmareach}
  \label{lem:reachisas}
    Let $\rmdp=(\states,\actions, \uncert)$ be an RMDP and $T \subseteq \states$. If $m>0$ and a pure memoryless policy for the agent $\agentpol$ exists, such that for every state $s \in \states$, we have $\val^{s,\agentpol}_{\rmdp}\left(\reach(T)\right)\geq m$, then for every state $s \in \states$, we have $\val^{s,\agentpol}_{\rmdp}\left(\reach(T)\right)=1$.
\end{restatable}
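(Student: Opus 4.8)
The plan is to fix the pure memoryless policy $\agentpol$ and to show, for an \emph{arbitrary} starting state $s$ and an \emph{arbitrary} environment policy $\envpol$, that the run never reaches $T$ with probability $0$; taking the supremum over $\envpol$ then yields $\val^{s,\agentpol}_{\rmdp}(\reach(T)) = 1$ for every $s$. The guiding observation is that because $\agentpol$ is memoryless, the \emph{residual} behaviour after any finite history is again governed by the same policy $\agentpol$, so the value lower bound $m$ is available afresh at every point along a run. This is what lets me upgrade a uniform positive bound into an almost-sure guarantee without any structural assumption on $\rmdp$ and despite the infinite adversary action space.

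Concretely, I would work in the probability space of $\prob^{\agentpol,\envpol}_{\rmdp}(s)$ with the natural filtration $(\mathcal{F}_n)_{n\geq 0}$ generated by the prefixes of length $n$ of a run, and consider the bounded martingale $Y_n = \prob^{\agentpol,\envpol}_{\rmdp}(s)[\reach(T) \mid \mathcal{F}_n]$. Since $\reach(T)$ is measurable with respect to $\mathcal{F}_\infty$, L\'evy's upward convergence theorem gives $Y_n \to \mathbb{1}_{\reach(T)}$ almost surely. The crux is to evaluate $Y_n$ on the event $\overline{\reach(T)}$ of never reaching $T$: on this event the current state $s_n$ at time $n$ lies outside $T$, and the conditional probability of reaching $T$ in the future equals the reach probability from $s_n$ under the environment policy obtained by shifting $\envpol$ by the observed history. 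As $\agentpol$ is memoryless, this shifted process is exactly $\rmdp$ started at $s_n$ under a legitimate environment policy, so the hypothesis yields $Y_n \geq \val^{s_n,\agentpol}_{\rmdp}(\reach(T)) \geq m$ for every $n$. Hence on $\overline{\reach(T)}$ one has simultaneously $Y_n \geq m > 0$ for all $n$ and $Y_n \to 0$, which forces $\prob^{\agentpol,\envpol}_{\rmdp}(s)[\overline{\reach(T)}] = 0$.

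I expect the delicate point to be the combination of the \emph{unbounded horizon} with the \emph{infinite} space of environment policies: the hypothesis only controls eventual reachability, and the adversary may both adapt to the history and delay progress arbitrarily, so there is no a priori uniform time bound after which $T$ is reached with fixed probability. The martingale formulation is designed precisely to sidestep this, since the inequality $Y_n \geq m$ is established pathwise from residual optimality and never appeals to a uniform horizon. For completeness I would contrast this with the more elementary route that tries to extract a horizon $N$ and a constant $c > 0$ such that $T$ is reached within $N$ steps with probability at least $c$ from every state against every environment, and then iterates over consecutive length-$N$ blocks to obtain a bound of the form $(1-c)^k \to 0$; making $N$ and $c$ uniform, however, requires a compactness argument over the infinite policy space of the environment, whereas the martingale argument yields the same conclusion directly from the stated hypothesis.
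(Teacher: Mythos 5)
Your proof is correct, but it takes a genuinely different route from the paper's proof of this lemma. The paper argues elementarily: for each state $s$ it extracts a finite horizon $k_s$ with $\val^{s,\agentpol}_{\rmdp}\left(\reach_{\leq k_s}(T)\right) = p_s > 0$, takes $k = \max_s k_s$ and $p = \min_s p_s$ (using finiteness of $\states$), and then iterates in blocks of $k$ steps: since $\agentpol$ is memoryless, after any history the residual process is again $\rmdp$ under $\agentpol$, so each block hits $T$ with probability at least $p$ against any environment, and the probability of never reaching $T$ is at most $(1-p)^j$ for every $j$, hence $0$. Your argument replaces this horizon-extraction-plus-geometric-decay step with L\'evy's upward convergence applied to the Doob martingale $Y_n$, deriving a contradiction on the never-reach event; this is precisely the technique the paper itself uses for the parity analogue (Lemma~\ref{lem:parityisas}, via Lemmas~\ref{app:levys} and~\ref{app:zolawrmdp}), where prefix-independence makes the bound $Y_n \geq m$ immediate, and your restriction of that inequality to the event $\overline{\reach(T)}$ is the right adaptation given that reachability is not prefix-independent. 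Both proofs rely on the same shift argument enabled by memorylessness of $\agentpol$. The trade-off you flag is real: the paper's first step---that a positive unbounded-horizon value forces a positive bounded-horizon value uniformly over the infinitely many environment policies---is exactly where compactness of the sets $\uncert(s,a)$ (assumed in the paper) is implicitly needed, so that the bounded-horizon infima are attained and, by finiteness of the state space, an environment that avoids $T$ for every horizon could avoid it forever; your martingale argument needs no such compactness and is therefore somewhat more general, at the price of invoking martingale convergence rather than an elementary counting-of-blocks argument.
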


In case $B \neq \varnothing$, the algorithm computes $Z = \PA^\rmdp_\env(B)$, which are the states where the environment can employ a policy to reach $B$ with non-zero probability, hence reaching $T$ with probability less than $1$. Note that by removing $Z$ from $\rmdp$, the algorithm also updates $\actions(s)$ for all $s \in \states$ to only contain those actions that cannot cause $Z$ being reached in one step. 
The algorithm removes $Z$ from $\rmdp$ and carries on to the next loop iteration. The following theorem establishes the correctness and oracle complexity of the algorithm and shows that almost-sure and limit-sure coincide for reachability objectives.

\begin{restatable}{theorem}{theoremreach}
  \label{thm:reach}
    Given an RMDP $\rmdp = (\states, \actions, \uncert)$ and a target set $T \subseteq \states$, the following assertions hold:
    \begin{compactenum}
        \item (Correctness) for all states $s \in \states$, we have $\val^{s}_{\rmdp}\left(\reach(T)\right)=1$, if and only if $s \in \asreach(\rmdp,T)$;
        \item (Oracle Complexity) $\asreach(\rmdp,T)$ always terminates with $O(|\states|^3)$ oracle calls; and
        \item (Almost-Sure vs Limit-Sure) there exists a pure memoryless policy $\agentpol^*$ such that for all states $s \in \states$, we have $\val^{s}_{\rmdp}\left(\reach(T)\right)=1$, if and only if $\val^{s,\agentpol^*}_{\rmdp}\left(\reach(T)\right)=1$.
    \end{compactenum}
\end{restatable}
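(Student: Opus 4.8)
The plan is to analyze the loop of $\asreach$ directly, writing $\rmdp_k$ for the (restricted) RMDP at the start of iteration $k$, $R_k$ for the states removed during iterations $1,\dots,k$, and $W = \asreach(\rmdp,T)$ for the set that survives when the loop halts (the iteration with $B=\varnothing$). The single structural fact I will lean on throughout is a consequence of compactness: since each $\uncert(s,a)$ is compact and $\trans\mapsto\trans[B]$ is continuous, whenever $\force^\rmdp_\agent(s,B)$ or $\force^\rmdp_\env(s,B)$ returns $\true$ the witnessing action/transition actually places mass at least $\eta(s)>0$ into $B$; minimizing over the finitely many states gives a \emph{uniform} bound $\eta>0$. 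This is exactly where the infinite action space of the environment is tamed, and it upgrades the ``nonzero probability'' guarantees of \Cref{lem:attr} into the quantitative lower bounds needed below.

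For the soundness half of Correctness (if $s\in W$ then $\val^{s}_{\rmdp}(\reach(T))=1$), note that on termination $B=\varnothing$ means $\PA_\agent(\rmdp^*,T)=W$ for the final RMDP $\rmdp^*$. I would assemble the agent attractor actions into a single pure memoryless policy $\agentpol^*$ that, at a state of attractor level $i$, plays the witnessing action of $\force_\agent$ toward level $i-1$. Since $\agentpol^*$ uses only actions surviving in $\rmdp^*$, the $\setminus$-operation guarantees these cannot leave $W$ regardless of $\envpol$, so every run stays inside $W$. Using the uniform $\eta$, from any state of $W$ the event ``the attractor level strictly decreases at each of the next $|W|$ steps'' has probability at least $\eta^{|W|}$ against every $\envpol$ and forces a visit to $T$; hence $\val^{s,\agentpol^*}_{\rmdp}(\reach(T))\ge \eta^{|W|}=:m>0$ uniformly, and \Cref{lem:reachisas} promotes this to value $1$. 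This construction simultaneously supplies the witness policy needed for assertion~3.

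The completeness half (if $\val^{s}_{\rmdp}(\reach(T))=1$ then $s\in W$) is where I expect the real work, and I would prove its contrapositive by induction on $k$: every $s\in R_k$ satisfies $\val^{s}_{\rmdp}(\reach(T))<1$, witnessed by an environment policy keeping the probability of \emph{never} reaching $T$ above a uniform $\epsilon_k>0$. For the step, consider $s\in Z_k=\PA_\env(\rmdp_k,B_k)$. The relation $B_k=\states_k\setminus\PA_\agent(\rmdp_k,T)$ makes $B_k$ an environment trap disjoint from $T$: for every action legal in $\rmdp_k$ the environment keeps all mass inside $B_k$, and the only way the agent leaves $B_k$ is to play an action forbidden in $\rmdp_k$, which admits a transition placing positive mass on $R_{k-1}$. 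The environment policy I build therefore (i) runs the $\PA_\env$ attractor toward $B_k$, (ii) traps inside $B_k$, and (iii) whenever the agent plays a forbidden action diverts positive mass into $R_{k-1}$ and switches to the inductive policy there. The main obstacle is certifying a \emph{uniform} $\epsilon_k$ against an adaptive agent who has access to the full, larger action set of $\rmdp$ and may escape the sub-RMDP: I plan to argue that, by the uniform $\eta$, on the event that the level toward $B_k$ strictly decreases at each of the next $|\states_k|$ steps (probability $\ge\eta^{|\states_k|}$) the play reaches $B_k\cup R_{k-1}$, that from $B_k$ it avoids $T$ forever unless the agent defects into $R_{k-1}$, and that by compactness the environment can force mass at least $\mu>0$ into $R_{k-1}$ on any forbidden action. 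Combining with the inductive $\epsilon_{k-1}$ yields $\epsilon_k\ge\min\{\eta^{|\states_k|},\,\mu\,\epsilon_{k-1}\}>0$, so a single policy caps $\prob[\reach(T)]$ at $1-\epsilon_k$ and $\val^{s}_{\rmdp}(\reach(T))\le 1-\epsilon_k<1$.

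Finally, Oracle Complexity is immediate: each iteration with $B\neq\varnothing$ removes at least the nonempty set $B_k\subseteq Z_k$, so the loop runs at most $|\states|$ times, and by \Cref{lem:attr} each of the two attractor computations costs $O(|\states|^2)$ oracle calls, giving $O(|\states|^3)$ total. Assertion~3 (almost-sure $=$ limit-sure) then follows by combining the two halves: the policy $\agentpol^*$ of the soundness argument is pure, memoryless, and attains value $1$ on all of $W$, while on $\states\setminus W$ completeness gives $\val^{s,\agentpol^*}_{\rmdp}(\reach(T))\le\val^{s}_{\rmdp}(\reach(T))<1$; hence $\val^{s}_{\rmdp}(\reach(T))=1$ iff $\val^{s,\agentpol^*}_{\rmdp}(\reach(T))=1$.
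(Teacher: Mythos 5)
Your proposal is correct and follows essentially the same route as the paper's proof: soundness via the agent's attractor policy together with a uniform compactness-based probability bound that Lemma~\ref{lem:reachisas} promotes to value~$1$, completeness by showing every state removed in some iteration has value~$<1$ via the environment's attractor-then-trap policy, the identical $O(|\states|^3)$ counting, and assertion~3 read off from the constructed pure memoryless policy. In fact your completeness half is more careful than the paper's terse Claim~2 (you explicitly handle the agent defecting to actions deleted in earlier iterations, which the paper glosses over); the only slip is that the recursion should be a product, $\epsilon_k \ge \eta^{|\states_k|}\,\mu\,\epsilon_{k-1}$, rather than $\min\{\eta^{|\states_k|},\mu\,\epsilon_{k-1}\}$ (a clean descent can still be followed by a defection out of $B_k$, compounding both factors), which is harmless since only positivity of $\epsilon_k$ is used.
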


\begin{algorithm}[tb]
   \caption{$\asreach(\rmdp,T)$ Almost-Sure Reachability}
   \label{alg:reach}
\begin{algorithmic}[1]
   \STATE {\bfseries Input:} RMDP $\rmdp=(\states,\actions, \uncert)$, Target set $T$.
   \REPEAT
   \STATE $B = \states \setminus \PA_\agent(\rmdp,T)$   \COMMENT{cannot reach $T$}
   \STATE $Z = \PA_\env(\rmdp,B)$ \COMMENT{cannot \em{almost-surely} reach $T$}
   \STATE $\rmdp = \rmdp \setminus Z$
   \UNTIL{$B = \varnothing$}
   \STATE \textbf{return} $\states$
\end{algorithmic}
\end{algorithm}

The following example shows why the loop in Algorithm \ref{alg:reach} is necessary.

\paragraph{Example.} Consider the RMDP in Figure \ref{fig:reach-example} with the uncertainty set as in the previous example. Suppose $\{s_5\}$ is the target set. In its first iteration, the algorithm computes $B=\states \setminus \PA_\agent(\{s_5\}) = \{s_4\}$. This is because by taking $a$ from $s_1,s_2$ and $s_3$, there is a non-zero probability of reaching $s_5$ in the next step. Next, the algorithm computes $Z = \PA_\env(\rmdp,B) = \{s_3,s_4\}$. The reason is that the only available action at $s_3$ is $a$ which has a non-zero probability of reaching $s_4$. The algorithm then removes $Z$ from $\rmdp$ and proceeds to the next loop iteration, where only $s_2$ is removed. In the third loop iteration $s_1$ will be removed. The algorithm will then terminate by returning $\{s_5\}$.

This example shows the existence of RMDPs where the loop in Algorithm \ref{alg:reach} is taken more than once. By extending the chain of $s_i$ states in the example, one can construct an RMDP, for which Algorithm \ref{alg:reach} takes $O(|\states|)$ number of iterations before terminating. 


\subsection{Parity}
In this section, we introduce our algorithm to solve almost-sure parity for the agent, inspired by the algorithm of~\cite{Zielonka98}. The outline of our approach is formally shown in Algorithm~\ref{alg:parity-zielonka}. In addition, we present an analogous algorithm to solve almost-sure parity for the environment, shown in Appendix~\ref{app:asparity-env}. Complete proofs of this section are provided in Appendix~\ref{app:thm:parity}.

Given an RMDP $\rmdp$ with state-set $\states$, a priority function $c$, and maximum priority $d$, the algorithm proceeds in steps, removing several states in each step. Without loss of generality, we assume that $d$ is even (if $d$ is odd, we increase it by 1). Each step first computes the set $B = \states \setminus \PA_\agent(\rmdp,\states_d)$. These are states of the RMDP in which the environment can employ a policy to never reach the set of maximum priority states $\states_d$. It then constructs a new RMDP $\rmdp'$ by restricting the state-space to $B$. Note that by restricting the state-space to $B$ in $\rmdp$, the environment only chooses transitions $\trans \in \uncert(s,a)$ if $\trans[B] = 1$. Next, it recursively computes the set of states $W_\env$ in $\rmdp'$, where the environment can employ a policy to guarantee almost-surely that the maximum priority visited infinitely often is odd. Finally, it removes the states in $\PA_\env(\rmdp,W_\env)$ from $\rmdp$ and carries on to the next loop iteration. The intuition behind the algorithm is as follows. Every run starting from the state $s \in \PA_\agent(\rmdp,\states_d)$ either stays in $\PA_\agent(\rmdp,\states_d)$ forever or eventually reaches $B$. If it always stays in $\PA_\agent(\rmdp,\states_d)$, then the agent has a policy of reaching $\states_d$ infinitely often, and consequently, almost surely satisfies the parity objective. Otherwise, it eventually reaches $B$. The environment does not have any incentive to visit $\PA_\agent(\rmdp,\states_d)$ again and the agent cannot enforce it. Therefore, the run always stays in $B$. 
If $W_\env = \varnothing$, then no matter what policy the environment chooses, the agent has a policy which guarantees the parity with a non-zero probability. Lemma~\ref{lem:parityisas} shows that in such cases, for all states, the agent has a policy with value 1.

\begin{restatable}{lemma}{lemmaparity}
  \label{lem:parityisas}
    Let $\rmdp=(\states,\actions, \uncert)$ and $c$ be an RMDP and a priority function, respectively. If $m>0$ and a pure memoryless policy for the agent $\agentpol$ exists, such that for every state $s \in \states$, we have $\val^{s,\agentpol}_{\rmdp}\left(\parity(c)\right)\geq m$, then for every state $s \in \states$, we have $\val^{s,\agentpol}_{\rmdp}\left(\parity(c)\right)=1$.
\end{restatable}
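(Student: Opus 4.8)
The plan is to fix the pure memoryless agent policy $\agentpol$ that witnesses $\val^{s,\agentpol}_\rmdp(\parity(c)) \geq m$ for every $s$, and then to prove the \emph{stronger} statement that $\prob^{\agentpol,\envpol}_\rmdp(s)[\parity(c)] = 1$ for \emph{every} environment policy $\envpol$ and every state $s$; taking the infimum over $\envpol$ afterwards yields $\val^{s,\agentpol}_\rmdp(\parity(c)) = 1$. The structural feature I would exploit, and the one that distinguishes this from the reachability setting, is that $\parity(c)$ is \emph{prefix-independent}: membership of a run in $\parity(c)$ depends only on the set of states visited infinitely often, and is therefore unchanged if any finite prefix of the run is altered.

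First I would fix an arbitrary environment policy $\envpol$ and work under the measure $\prob^{\agentpol,\envpol}_\rmdp(s)$. Let $\mathcal{F}_n$ be the $\sigma$-algebra generated by the first $n$ steps, and let $s_n, h_n$ be the state and history after $n$ steps. The key computation is the conditional probability of $\parity(c)$ given $\mathcal{F}_n$. Because $\agentpol$ is memoryless, the agent's continuation after $h_n$ is again exactly $\agentpol$; because the restriction of $\envpol$ after $h_n$ is itself a valid environment policy $\envpol_{h_n}$ (it still picks, at each state--action pair, a distribution from the corresponding set of $\uncert$); and because $\parity(c)$ is prefix-independent, the conditional law of the future coincides with $\prob^{\agentpol,\envpol_{h_n}}_\rmdp(s_n)$, so that almost surely, for every $n$,
\begin{align*}
\mathbb{E}^{\agentpol,\envpol}\!\left[\mathbf{1}[\parity(c)] \mid \mathcal{F}_n\right]
&= \prob^{\agentpol,\envpol_{h_n}}_\rmdp(s_n)[\parity(c)] \\
&\geq \val^{s_n,\agentpol}_\rmdp(\parity(c)) \geq m.
\end{align*}
The middle inequality is just the definition of the value as an infimum over environment policies, applied to the one particular residual policy $\envpol_{h_n}$.

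Next I would invoke L\'evy's zero--one law (the upward martingale convergence theorem): since $\mathbf{1}[\parity(c)]$ is measurable with respect to $\mathcal{F}_\infty = \sigma(\bigcup_n \mathcal{F}_n)$, the conditional expectations $\mathbb{E}^{\agentpol,\envpol}[\mathbf{1}[\parity(c)] \mid \mathcal{F}_n]$ converge almost surely to $\mathbf{1}[\parity(c)]$. Combined with the uniform lower bound $m>0$ from the previous step, the limit $\mathbf{1}[\parity(c)]$ is at least $m$ almost surely; as this indicator takes only the values $0$ and $1$ and $m>0$, it must equal $1$ almost surely. Hence $\prob^{\agentpol,\envpol}_\rmdp(s)[\parity(c)] = 1$, and since $\envpol$ was arbitrary this gives $\val^{s,\agentpol}_\rmdp(\parity(c)) = 1$ for every $s$.

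I expect the main obstacle to be the measure-theoretic justification of the conditional-expectation identity: one must argue carefully that after the history $h_n$ the process is governed by the \emph{same} memoryless $\agentpol$ and by a \emph{genuine} environment policy $\envpol_{h_n}$, so that the conditional law of the future equals $\prob^{\agentpol,\envpol_{h_n}}_\rmdp(s_n)$, and that $\val^{s_n,\agentpol}_\rmdp(\parity(c))$ is a well-defined $\mathcal{F}_n$-measurable lower bound on it. This is precisely where both prefix-independence of parity and memorylessness of the agent are indispensable: drop either, and the conditional probability can no longer be bounded below by the fixed constant $m$. The remaining ingredients (the cylinder-set probability space and L\'evy's theorem) are standard. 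I would also note that this scheme subsumes the reachability Lemma~\ref{lem:reachisas}: although $\reach(T)$ is not prefix-independent, it still satisfies $\mathbb{E}^{\agentpol,\envpol}[\mathbf{1}[\reach(T)] \mid \mathcal{F}_n] \geq m$ almost surely, being equal to $1$ once $T$ has been hit and at least $m$ otherwise, so the same L\'evy argument applies.
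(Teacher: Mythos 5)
Your proposal is correct and follows essentially the same route as the paper: the paper likewise bounds the conditional probability $\prob^{\agentpol,\envpol}_\rmdp(s_0)[\parity(c)\mid h_n]\geq m$ using memorylessness of $\agentpol$ together with prefix-independence of parity, and then applies L\'evy's zero--one law to force this conditional probability to converge to $1$ almost surely. Your write-up is in fact somewhat more explicit than the paper's about the measure-theoretic step (identifying the conditional law with $\prob^{\agentpol,\envpol_{h_n}}_\rmdp(s_n)$ via the residual environment policy), but the underlying argument is the same.
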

If $W_\env \neq \varnothing$, then the environment employs a policy for the states in $\PA_\env(\rmdp,W_\env)$ so that the parity objective is satisfied with probability less than 1. Removing the set $\PA_\env(\rmdp,W_\env)$ from $\rmdp$ results in a new RMDP with a smaller state space. 
The algorithm continues until $W_\env = \varnothing$. The following theorem establishes the correctness and oracle complexity of the algorithm and shows that almost-sure and limit-sure coincide for reachability objectives.

\begin{restatable}{theorem}{theoremparity}
  \label{thm:parity}
    Given an RMDP $\rmdp = (\states, \actions, \uncert)$ and a priority function $c \colon \states \to \{0, \ldots, d\}$, the following assertions~hold:

    \begin{compactenum}
        \item (Correctness) for all states $s \in \states$, we have $\val^{s}_{\rmdp}\left(\parity(c)\right)=1$, if and only if $s \in \asparity_\agent(\rmdp,c,d)$;
        \item (Oracle Complexity) $\asparity_\agent(\rmdp,c,d)$ always terminates with $O \left(|\states|^{d+2} \right)$ oracle calls; and
        \item (Almost-Sure vs Limit-Sure) there exists a pure memoryless policy $\agentpol^*$ such that for all states $s \in \states$, we have $\val^{s}_{\rmdp}\left(\parity(T)\right)=1$, if and only if $\val^{s,\agentpol^*}_{\rmdp}\left(\parity(T)\right)=1$.
    \end{compactenum}
\end{restatable}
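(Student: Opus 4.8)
The plan is to prove all three assertions together by induction on the maximum priority $d$, after strengthening the statement so that the induction carries enough information: for every RMDP with a priority function of maximum priority at most $d$, the state space splits as $\states = W_\agent \uplus W_\env$, where $W_\agent = \asparity_\agent(\rmdp,c,d)$ is the agent's almost-sure winning region for even parity and $W_\env = \asparity_\env(\rmdp,c,d)$ is the environment's almost-sure winning region for odd parity, and each player has a single pure memoryless policy witnessing its region. Proving the two dual versions simultaneously is unavoidable, since the recursive call $W_\env = \asparity_\env(\rmdp',c',d-1)$ inside $\asparity_\agent$ must be read through the inductive hypothesis for the dual algorithm with one fewer priority. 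The base case $d=0$ is immediate (every run satisfies even parity, so $\PA_\agent(\rmdp,\states_0)=\states$, $B=\varnothing$, and the loop returns $\states$); when $d$ is odd I first replace it by $d+1$, which makes $\states_d=\varnothing$ and hands control to the environment's recursion.

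For soundness I would fix the terminal RMDP $\rmdp^*$ obtained when the loop exits with $W_\env=\varnothing$, and write $A=\PA_\agent(\rmdp^*,\states_d)$ and $B=\states^*\setminus A$. The fixed-point property of the positive attractor gives that $B$ is a trap in which the environment can remain with probability one, so the restriction $\rmdp'=\rmdp^*_{|B}$ is faithful and $\asparity_\env(\rmdp',c',d-1)=\varnothing$ forces, by the inductive determinacy at level $d-1$, a pure memoryless agent policy $\agentpol_B$ winning even parity almost surely everywhere on $B$. I would then glue $\agentpol_B$ on $B$ with the pure memoryless attractor strategy toward $\states_d$ on $A$ from Lemma~\ref{lem:attr}, obtaining one pure memoryless policy $\agentpol^*$. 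The key dynamical observation is that whenever the environment leaves $B$ it can only enter $A$, which is favourable for the agent; combined with the uniform lower bound (obtained from compactness of the uncertainty sets) on the per-step probability of decreasing the attractor rank, this yields a uniformly positive probability of parity from every state, either by being absorbed into $B$ (where $\agentpol_B$ wins) or by revisiting $\states_d$, and hence the maximal even priority $d$, infinitely often. Lemma~\ref{lem:parityisas} then upgrades this uniform positive value to value $1$, which simultaneously delivers the inclusion $W_\agent\subseteq\{s:\val^s_{\rmdp}(\parity(c))=1\}$ and the pure memoryless policy required by assertion~3.

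For completeness I would argue that every state deleted as $Z=\PA_\env(\rmdp,W_\env)$ in some iteration has agent value strictly below $1$, by exhibiting an environment policy that wins odd parity with probability one: the recursive hypothesis gives such a policy from $W_\env\subseteq B$ inside $\rmdp'$ (remaining in $B$), the pure memoryless environment attractor reaches $W_\env$ with positive probability, and the environment analogue of Lemma~\ref{lem:parityisas} lifts the resulting positive value to $1$. The step I expect to be the main obstacle is the subgame-preservation argument that ties the loop iterations together: I must show that deleting $Z$ together with the agent actions that could reach it does not change the winning verdict of any surviving state. I would establish this by the standard attractor-removal reasoning adapted to the infinite environment action space, namely that because the environment can force entry into $Z$ (and then win) whenever such an action is enabled, any value-$1$ agent policy must avoid exactly those actions, so the agent's almost-sure region of $\rmdp$ meets $\states\setminus Z$ precisely in the almost-sure region of $\rmdp\setminus Z$. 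Iterating this equivalence justifies the loop and, together with soundness, yields both directions of assertion~1 and the partition $\states=W_\agent\uplus W_\env$ feeding the induction.

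Finally, for the oracle complexity (assertion~2) I would bound the number of loop iterations by $|\states|$, since every iteration with $W_\env\neq\varnothing$ removes at least one state, and charge $O(|\states|^2)$ oracle calls per iteration to the two attractor computations via Lemma~\ref{lem:attr}. This gives the recurrence $T(|\states|,d)\le |\states|\cdot\bigl(O(|\states|^2)+T(|\states|,d-1)\bigr)$ with base case $T(|\states|,0)=O(|\states|^2)$, whose solution is $O(|\states|^{d+2})$. Assertion~3 (almost-sure equals limit-sure) then follows because the pure memoryless $\agentpol^*$ constructed in the soundness step attains value $1$ on exactly the set $W_\agent$, while completeness shows every state outside $W_\agent$ has value strictly below $1$; hence $\val^s_{\rmdp}(\parity(c))=1$ if and only if $\val^{s,\agentpol^*}_{\rmdp}(\parity(c))=1$.
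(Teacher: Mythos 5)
Your overall structure---induction on $d$ with the dual procedures, soundness by gluing the attractor policy toward $\states_d$ with the recursive policy on the trap $B$ and then upgrading a uniform positive value to $1$ via Lemma~\ref{lem:parityisas}, completeness by showing every deleted state has value below $1$, a subgame-preservation argument for the loop, and the recurrence giving $O(|\states|^{d+2})$ oracle calls---is essentially the paper's proof. However, your completeness step contains a genuine error: you claim that every state deleted as $Z=\PA_\env(\rmdp,W_\env)$ admits an environment policy that wins odd parity \emph{with probability one}, obtained by lifting a positive value with ``the environment analogue of Lemma~\ref{lem:parityisas}''. Both the claim and the lifting are wrong. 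The lemma requires a single pure memoryless policy whose value is at least $m>0$ from \emph{every} state of the RMDP; here the environment's policy has positive value only on $Z$, while states outside $Z$ may have environment value $0$, so its hypothesis fails. Moreover the conclusion is false in general: for $s\in Z\setminus W_\env$, the positive attractor only guarantees that for each agent action some $\trans\in\uncert(s,a)$ puts positive mass on the next attractor level; the remaining mass can land in the agent's almost-sure winning region. For instance, if every agent action at $s$ forces probability $\tfrac{1}{2}$ into $W_\env$ and $\tfrac{1}{2}$ into a state where the agent wins almost surely, then $s\in Z$ but the environment's value for $\coparity(c)$ at $s$ is $\tfrac{1}{2}$, not $1$.

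The gap is local and repairable, and the repair is exactly what the paper does: completeness only needs the weaker statement that the agent's value at every deleted state is strictly below $1$. This follows with no lifting at all---from $W_\env$ the induction hypothesis, combined with the fact that $B$ is a trap the agent cannot leave, gives agent value $0$ in the unrestricted RMDP, and from $Z$ the environment attractor reaches $W_\env$ with probability at least $p_\env^{|\states|}$, so the agent's value on $Z$ is at most $1-p_\env^{|\states|}<1$. The same weakening must be propagated into your subgame-preservation step: the parenthetical ``(and then win)'' should read ``and then keep the agent's value below $1$''; your argument that any value-$1$ agent policy must avoid actions permitting entry into $Z$ goes through verbatim with this weaker guarantee, since such an action already caps the value away from $1$. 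With these corrections your proposal coincides with the paper's proof.
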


\begin{remark}
In case the maximum priority is constant, the algorithm~$\asparity_\agent$ runs in polynomial time. Many objectives such as liveness, reach-avoid, safety, B{\"u}chi, co-B{\"u}chi, progress, etc. can be modeled as parity objectives with constant maximum priority. However, the upper bound over oracle calls is exponential with respect to the maximum priority in general. In Appendix~\ref{app:efficient-parity}, we present a similar algorithm $\easparity_\agent$ with a quasi-polynomial upper bound over oracle calls.
\end{remark}

\begin{algorithm}[tb]
   \caption{$\asparity_\agent(\rmdp,c, d)$ Almost-Sure Parity for Agent}
   \label{alg:parity-zielonka}
\begin{algorithmic}[1]
   \STATE {\bfseries Input:} RMDP $\rmdp=(\states,\actions, \uncert)$, priority function~$c$, maximum priority $d$
   \IF{$\states = \varnothing$}
   \STATE \textbf{return} $\varnothing$
   \ENDIF
   
   \REPEAT
   \STATE $\states_d = \{s \in \states | c(s) = d\}$   \COMMENT{maximum priority states}
   \STATE $B = \states \setminus \PA_\agent(\rmdp,\states_d)$   \COMMENT{cannot reach priority-$d$}
   \STATE $\rmdp' = \rmdp_{|B}$
   \STATE $W_\env = \asparity_\env(\rmdp', c, d-1)$ \COMMENT{solve for environment}
   \STATE $G = \PA_\env(\rmdp,W_\env)$ \COMMENT{does not satisfy parity almost-surely for agent}
   \STATE $\rmdp = \rmdp \setminus G$
   \UNTIL{$W_\env \equiv \varnothing$}
   
   \STATE \textbf{return} $\states$
\end{algorithmic}
\end{algorithm}

\begin{restatable}{corollary}{theoremeffparity}
  \label{thm:eff-parity}
    Given an RMDP $\rmdp = (\states, \actions, \uncert)$ and a priority function $c \colon \states \to \{0, \ldots, d\}$, the following assertions~hold:

    \begin{compactenum}
        \item (Correctness) for all states $s \in \states$, we have $\val^{s}_{\rmdp}\left(\parity(c)\right)=1$, if and only if $s \in \easparity_\agent(\rmdp,c,d,|\states|,|\states|)$; and
        \item (Oracle Complexity) $\easparity_\agent(\rmdp,c,d,|\states|,|\states|)$ always terminates with $|\states|^{O \left(\log_2 d \right)}$ oracle calls.
    \end{compactenum}

\end{restatable}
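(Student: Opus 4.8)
The plan is to mirror the proof of Theorem~\ref{thm:parity} but for the quasi-polynomial variant, reducing correctness to the already-established correctness of $\asparity_\agent$ and separately analyzing the recurrence on the number of oracle calls. The algorithm $\easparity_\agent$ follows Parys's quasi-polynomial reshaping of Zielonka's recursion \cite{Parys19}: the two extra integer arguments are \emph{precision bounds}, one per player, that cap the size of the player dominions the recursion is willing to certify, and each descent into a lower priority halves one of these bounds. My first step is to show that when both bounds are initialized to $|\states|$, the bounded recursion computes exactly the same set as the unbounded $\asparity_\agent$; correctness (assertion~1) then follows immediately from Theorem~\ref{thm:parity}(1).

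For this coincidence I would argue by induction on the priority $d$ and on $|\states|$ that, whenever both precision bounds are at least the current number of states, $\easparity_\agent$ and $\asparity_\agent$ return the same set. The point is that every winning dominion is a subset of $\states$ and hence has size at most $|\states|$, so a precision bound equal to $|\states|$ is never binding at the top level. The care needed is in the recursive calls where one bound is halved: I must verify that a branch terminated because its halved bound has dropped below the size of a genuine dominion could not have contributed any new winning state, because that dominion's positive attractor (via $\PA_\env$ or $\PA_\agent$) would already have been removed in the full-precision sibling branch. This is exactly Parys's invariant --- every dominion of size at most the current bound is correctly detected --- transported to our setting; the transport is legitimate because, by Lemma~\ref{lem:attr}, the oracle-based positive attractors $\PA_\agent$ and $\PA_\env$ enjoy the same combinatorial closure and monotonicity properties as the game attractors used in \cite{Parys19}, and the restriction operations $\rmdp \setminus G$ and $\rmdp_{|B}$ behave as the corresponding subgame constructions.

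For the oracle complexity (assertion~2) I would set up the recurrence for the number of oracle calls $f(n, d, l_\agent, l_\env)$ made by $\easparity_\agent$ on an RMDP with $n = |\states|$ states. Each invocation performs a constant number of positive-attractor computations, each costing $O(n^2)$ oracle calls by Lemma~\ref{lem:attr}(3), and makes recursive calls at priority $d-1$ in which one of the two precision bounds is halved while the other is preserved, possibly on a reduced state space. Following Parys's analysis, halving a bound on each priority-decreasing descent means that along any root-to-leaf path a bound can be halved at most $O(\log_2 d)$ times before it forces termination, so the number of leaves of the recursion tree is bounded by $n^{O(\log_2 d)}$; multiplying by the $O(n^2)$ per-node attractor cost preserves the $|\states|^{O(\log_2 d)}$ bound.

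The main obstacle is the correctness coincidence, specifically the invariant that cutting off a recursive branch when its precision bound is exhausted never discards an almost-sure winning state. In the classical setting this is the heart of Parys's correctness argument, and the crux here is to certify that the oracle abstraction faithfully realizes the game-attractor reasoning despite the environment's infinite action space: once $\PA_\agent$ and $\PA_\env$ are shown by Lemma~\ref{lem:attr} to capture exactly the positive-probability forcing that the Zielonka-style dominion arguments rely on, Parys's inductive invariant carries over essentially unchanged, and the remainder reduces to the routine recurrence solving sketched above.
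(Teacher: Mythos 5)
Your correctness plan follows essentially the same route as the paper's: reduce assertion~1 to Theorem~\ref{thm:parity} by arguing that, with both precision bounds initialized to $|\states|$, the bounded recursion returns the same set as $\asparity_\agent$. The paper's (terse) justification is the observation that the sets removed across iterations of $\asparity_\agent$ are pairwise disjoint, so at most one iteration can produce an environment winning set of size exceeding $\lfloor |\states|/2 \rfloor$, and the single full-precision middle call of $\easparity_\agent$ is there to catch exactly that iteration; your Parys-invariant induction is a more explicit rendering of the same reduction, and that part is fine.

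The genuine gap is in the oracle-complexity argument. First, an invocation of $\easparity_\agent$ does not perform ``a constant number of positive-attractor computations'': its two repeat-loops together run up to $|\states|$ iterations (each removes at least one state), and \emph{every} such iteration performs attractor computations \emph{and} one recursive call with a halved environment bound, in addition to the single middle call with the full bound. This factor-$|\states|$ branching on every bound-halving descent is precisely what makes the count nontrivial. Second, and more seriously, your claim that ``a bound can be halved at most $O(\log_2 d)$ times before it forces termination'' is false: the bounds are initialized to $|\states|$ and the cutoff $\size_\agent \leq 0$ (resp.\ $\size_\env \leq 0$) is reached only after $\Theta(\log_2 |\states|)$ halvings; the priority $d$ limits the recursion \emph{depth} (it drops by one per level), not the number of halvings. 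Consequently your leaf count $n^{O(\log_2 d)}$ does not follow from the stated reasoning. The paper instead defines $F(d,l)$ with $l=\lfloor\log_2(\size_\agent+1)\rfloor+\lfloor\log_2(\size_\env+1)\rfloor$ (so $l=\Theta(\log_2|\states|)$ at the top-level call), derives the recurrence $F(d,l)\leq 4|\states|^3+|\states|\,F(d-1,l-1)+F(d-1,l)$, and proves by induction that $F(d,l)\leq 4|\states|^{l+2}\binom{d+l}{l}-4|\states|^2$; this binomial-coefficient bound, which captures the interplay between the depth budget $d$ and the halving budget $l$, is the substance of the quasi-polynomial result and has no counterpart in your sketch. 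To repair your proof you would need to set up and solve this two-parameter recurrence (with the $|\states|$-fold branching per halving step), rather than estimate leaves times per-node cost.
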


The following example shows how Algorithm~\ref{alg:parity-zielonka} is applied to our running example:
\paragraph{Example.} Consider the RMDP in Figure \ref{fig:reach-example} with the uncertainty set as in the previous examples. The priority function~$c$ is defined as follows.
\[
    c(s) = \begin{cases}
        1  &s \in \{s_2, s_4\}\\
        2  &otherwise
    \end{cases}
\]
First, the procedure $\asparity_\agent(\rmdp, c, 2)$ is called. In its first iteration, the algorithm computes $B=\states \setminus \PA_\agent(\rmdp,\{s_1, s_3, s_5\}) = \{s_4\}$. Next, the algorithm constructs a new RMDP $\rmdp' = \rmdp_{|\{s_4\}}$. The algorithm then calls $\asparity_\env(\rmdp', c, 1)$ which outputs $\{s_4\}$ since there are no states with even priority in this sub-RMDP. The algorithm computes $\PA_\env(\rmdp,\{s_4\}) = \{s_3, s_4\}$. This is because by taking~$a$ from $s_3$, there is a non-zero probability of reaching $s_4$. It then removes $s_3$ and $s_4$ from $\rmdp$ and proceeds to the next loop iteration, where only $s_2$ is removed. The algorithm will then terminate by returning $\{s_1, s_5\}$.



\section{Experimental Evaluation} \label{sec:experiments}

We implemented a prototype of our algorithms $\asreach$, $\asparity$ and $\easparity$ to perform experiments and observe their practical performance. We compare the performance of our almost-sure reachability algorithm $\asreach$ against the state-of-the-art policy iteration method (RPPI) on polytopic uncertainty sets \cite{ChatterjeeGK0Z24}. 

The purpose of this section is threefold, (i) to demonstrate the applicability and efficiency of our algorithm for almost-sure reachability ($\asreach$) on well-motivated benchmarks (ii) to show runtime improvement of $\asreach$ on polytopic uncertainty sets ($L_1$ and $L_\infty$) in comparison to the baseline (RPPI), and (iii) to demonstrate and compare the performance of $\asparity$ and $\easparity$ algorithms.

\paragraph{Baselines.}
We compare the runtime of our algorithm $\asreach$ against the {\em robust polytopic policy iteration} (RPPI) algorithm proposed in~\cite{ChatterjeeGK0Z24}. Given a polytopic RMDP, RPPI reduces the problem of computing long-run average value to that of a two-player stochastic game. It then uses policy iteration to find the long-run average value of the game. 

In order to compare $\asreach$ to this baseline, we slightly change our benchmarks by adding a self-loop to the target states. We then set a reward of 0 for all transitions except for the newly added self-loops which are assigned a reward of 1. With this change, the target set is almost-surely reachable from an initial state $s$ if and only if the long-run average value of $s$ (computed by RPPI) is equal to 1.

\begin{table*}[t!]
\centering
\resizebox{\textwidth}{!}{
\begin{tabular}{|c|c||c|c|c|c|c|c||c|c|c|c|c|c|}
\hline
\multirow{3}{*}{Benchmark} & \multirow{3}{*}{Uncertainty} & \multicolumn{6}{|c||}{\textbf{$\asreach$}} & \multicolumn{6}{|c|}{\textbf{$\mathit{RPPI}$}} \\ \cline{3-14}
                                  &                                  & \multicolumn{2}{|c|}{$R_{max}=0.5$} & \multicolumn{2}{|c|}{\textbf{$R_{max}=1$}} & \multicolumn{2}{|c||}{\textbf{$R_{max}=1.5$}} & \multicolumn{2}{|c|}{\textbf{$R_{max}=0.5$}} & \multicolumn{2}{|c|}{\textbf{$R_{max}=1$}} & \multicolumn{2}{|c|}{\textbf{$R_{max}=1.5$}} \\ \cline{3-14}
                                  & & \small{count} & \small{avg. time} & \small{count} & \small{avg. time} & \small{count} & \small{avg. time} & \small{count} & \small{avg. time} & \small{count} & \small{avg. time} & \small{count} & \small{avg. time} \\ \noalign{\hrule height 0.3mm} \hline
\multirow{3}{*}{\textbf{Frozen Lake}} & $L_1$ & 24 &  10.1 &  24 &  10.8 &  24 &  10.0 &  4 &  1.0 &  4 &  5.5 &  1 &  6.2 \\ \cline{2-14}
                             & $L_2$ & 24 &  10.3 &  24 &  0.7 &  24 &  0.3 &  - &  - &  - &  - &  - &  - \\ \cline{2-14}
                             & $L_\infty$ & 24 &  10.6 &  21 &  6.83 &  24 &  0.7 &  4 &  8.2 &  3 &  0.4 &  6 &  2.4 \\ \hline \hline
\multirow{3}{*}{\textbf{QComp}} & $L_1$ & 24 &  6.8 &  23 &  5.1 &  24 &  6.4 &  9 & 7.0 &  10 &  7.8 &  10 &  7.8 \\ \cline{2-14}
                             & $L_2$ & 23 &  5.4 &  24 &  7.4 &  24 &  6.7 &  - &  - &  - &  - &  - &  - \\ \cline{2-14}
                             & $L_\infty$ & 23 &  5.4 &  24 &  6.9 &  24 &  7.0 &  10 &  6.9 &  10 &  6.8 &  10 &  6.8 \\ \hline \hline
\multicolumn{2}{|c||}{Total}  & 142 & 8.1 & 140 & 6.3 & 144 & 5.2 & 27 & 6.3 & 27 & 6.3 & 27 & 6.2 \\
\multicolumn{2}{|c||}{Solved By Both}  & 27 & 0.1 & 27 & 0.1 & 27 & 0.1 & 27 & 6.3 & 27 & 6.3 & 27 & 6.2 \\
\hline
\end{tabular}
}
\caption{Runtime comparison of reachability algorithms on Frozen Lake and QComp benchmarks. The average times are in seconds.}
\label{table:reach}
\end{table*}

\begin{table*}[t!]
\centering
\resizebox{\textwidth}{!}{
\begin{tabular}{|c|c||c|c|c|c|c|c||c|c|c|c|c|c|}
\hline
\multirow{3}{*}{Benchmark} & \multirow{3}{*}{Uncertainty} & \multicolumn{6}{|c||}{\textbf{$\asparity$}} & \multicolumn{6}{|c|}{\textbf{$\easparity$}} \\ \cline{3-14}
                                  &                                  & \multicolumn{2}{|c|}{$R_{max}=0.5$} & \multicolumn{2}{|c|}{\textbf{$R_{max}=1$}} & \multicolumn{2}{|c||}{\textbf{$R_{max}=1.5$}} & \multicolumn{2}{|c|}{\textbf{$R_{max}=0.5$}} & \multicolumn{2}{|c|}{\textbf{$R_{max}=1$}} & \multicolumn{2}{|c|}{\textbf{$R_{max}=1.5$}} \\ \cline{3-14}
                                  & & \small{count} & \small{avg. time} & \small{count} & \small{avg. time} & \small{count} & \small{avg. time} & \small{count} & \small{avg. time} & \small{count} & \small{avg. time} & \small{count} & \small{avg. time} \\ \noalign{\hrule height 0.3mm} \hline
\multirow{3}{*}{\textbf{Frozen Lake}} & $L_1$ & 15 &  7.3 &  15 &  7.4 &  15 &  8.0 &  12 &  7.8 &  12 &  8.0 &  12 &  8.7 \\ \cline{2-14}
                             & $L_2$ & 15 &  8.8 &  15 &  4.8 &  15 &  0.7 &  12 &  9.0 &  15 &  4.6 &  15 &  0.6 \\ \cline{2-14}
                             & $L_\infty$ & 15 &  7.6 &  15 &  10.9 &  15 &  5.6 &  12 &  8.4 &  15 &  11.6 &  15 &  6.5 \\ \hline \hline 
\multicolumn{2}{|c||}{Total} & 45 & 7.9 & 45 & 7.7 & 45 & 4.8 & 36 & 8.4 & 42 & 8.1 & 42 & 5.0 \\ \hline 
\end{tabular}
}
\caption{Runtime comparison of parity algorithms on Frozen Lake benchmarks. The average times are in seconds.}
\label{table:parity}
\end{table*}

\paragraph{Implementation Details.}
We implement Algorithm~\ref{alg:reach}, Algorithm~\ref{alg:parity-zielonka}, and Algorithm~\ref{alg:parity} to compute the set of states where the value of the RMDP initializing at those states is equal to 1 with respect to the reachability and parity objectives explained next. For computing oracles, we implemented algorithms provided in Appendix~\ref{app:oracles}. The time limit for all experiments is set to 60 seconds. The algorithms are implemented in Python~3.11 and all the experiments were run on a machine with an Apple M1 chip with macOS~14.3, and 8GB of RAM.

\paragraph{Benchmarks.}
We present two sets of benchmarks: Frozen Lake benchmarks with reachability and parity objectives and QComp benchmarks with reachability objectives. For robustness, we consider uncertainty sets with respect to $L_p$ norms between probability distributions for $p=1,2$ and $\infty$. The details are as follows:
\begin{compactitem}
    \item \textbf{Frozen Lake.} This benchmark modifies the Frozen Lake environment in OpenAI Gym~\cite{OpenAIGym}. Consider an $n \times n$ grid with an agent in the top left corner of it. The agent can choose to move right, left, up, or down but some cells in the grid are holes where the agent cannot enter them. Moreover, the grid is slippery, meaning that if the agent chooses to move in one direction, it may move in a perpendicular direction to the chosen action. We make this model robust by introducing uncertainty to the transitions. For the uncertainty, each state $s$ of the model is equipped with a non-negative real number $R(s)$ such that $0 \leq R(s) \leq R_{max}$ for some upper bound $R_{max}$. the adversarial environment can change the probabilities such that the $L_p$ distance of the modified and the original probability distributions is less than or equal to $R(s)$ when the RMDP is in state $s$. As a reachability objective, the goal of the agent is to reach the bottom right cell of the grid. We consider frozen lake models with $n=10k$ for $1 \leq k \leq 8$ and choose the holes and the uncertainty radii $R(s)$ randomly with three different upper bounds ($R_{max} \in  \{0.5,1,1.5\}$). Moreover, we used three different random seeds (for choosing the wholes and the uncertainty radii), leaving us with 24 instances for each configuration of $p$ and $R_{max}$.

    For an $\omega$-regular objective, suppose that there are two types of resources, one provided in the leftmost column and the other on the rightmost column of the grid. The agent's goal is to alternate between these two sets of resources forever. 
    Similar to reachability, we chose the holes and the uncertainty radii $R(s)$ randomly with three different upper bounds.
    In this case, we consider models with $n=10k$ for $1 \leq k \leq 5$ (a total of 15 instances for each configuration of $p$ and $R_{max}$). Smaller models were chosen for the parity objectives due to the fact that our parity algorithms have super-polynomial complexity while the reachability algorithm runs in polynomial time. 

    \item \textbf{QComp.} This benchmark set is inspired by the MDPs provided in the QComp repository~\cite{HartmannsKPQR19}. The original benchmark set contains huge MDPs that require massive resources for analysis. Therefore, we only consider those MDPs that are equipped with reachability objectives and have less than or equal to $10^4$ states. This leaves us with 24 instances. We introduce robustness to these benchmarks by allowing an adversarial environment to modify the probability distributions such that the $L_p$ distance between the modified and the original probability distributions is less or equal to a global value $R$.
\end{compactitem}

\vspace{-1em}
\paragraph{Tables Description.} The results for reachability and parity algorithms are summarized in Table~\ref{table:reach} and Table~\ref{table:parity}, respectively. The parameter $p$ corresponds to the $L_p$ norms used to define the uncertainty sets in the problem, with $p=1,2,\text{or }\infty$, representing different geometric shapes of uncertainty sets. The parameter $R_{max}$ denotes the maximum radius of the uncertainty sets, with $R_{max}=0.5,1, \text{or }1.5$. For each combination of $p$ and $R_{max}$, the following metrics are reported: (a)~count: the number of instances solved within the time limit; and (b)~avg. time: the average time (in seconds) required to solve instances that were successfully solved within the time limit. Higher count values and lower avg. time values indicate better performance.

\vspace{-0.5em}
\paragraph{Results for Reachability.} 
We compared our algorithm, $\asreach$, with the baseline approach, as shown in Table~\ref{table:reach}. The results demonstrate that $\asreach$ outperforms the baseline by solving 426 total instances, compared to only 81 solved by the baseline. This highlights the effectiveness of our algorithm for solving the almost-sure reachability problem, outperforming the RPPI algorithm. Moreover, it is noteworthy that while the RPPI approach is restricted to polytopic RMDPs, Table~\ref{table:reach} shows that our method can be efficiently applied to RMDPs with $L_2$ uncertainty sets. An advantage of our method is that it is oracle-based, hence we expect it to solve RMDPs with higher degree $L_p$-based uncertainties as well. Lastly, note that on the benchmarks that were solved by both methods, the average runtime of our approach is about 0.1 seconds while the baseline takes more than 6 seconds. This shows that while the baseline only solves the simplest benchmarks, our method is faster even on these simple instances by an order of magnitude. 

%
%
\vspace{-0.5em}
\paragraph{Results for Parity.} Table~\ref{table:parity} summarizes our experimental results comparing our two algorithms, $\asparity$ and $\easparity$ for parity objectives over the frozen lake benchmarks. These results show that although the worst-case complexity of $\asparity$ is worse than $\easparity$, but in practice, $\asparity$ is faster and can solve 135 instances in the designated timeout, where $\easparity$ can solve 120.
Moreover, Figure~\ref{fig:parity} shows the number of solved instances over time for each algorithm. The fact that $\asparity$'s curve is higher and more to the left, shows that $\asparity$ performs better on our benchmarks. 


\begin{figure}[t]
	\centering
	\includegraphics[scale=0.4]{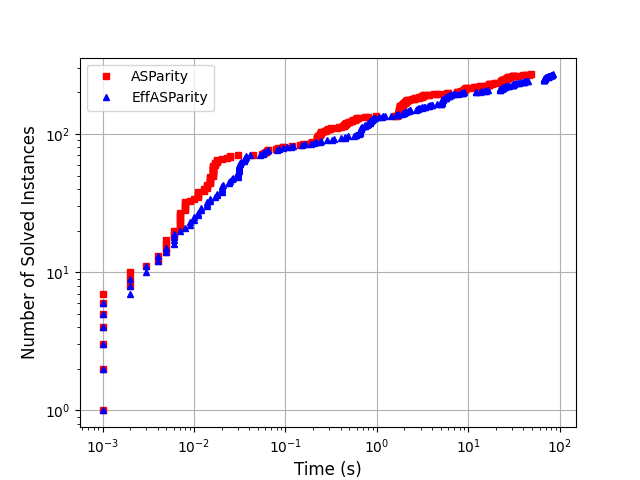}
	\caption{Runtime comparison of our parity algorithms on the Frozen Lake Model. 
    }
	\label{fig:parity}
\end{figure}

\vspace{-0.5em}
\section{Conclusion and Future Work} \label{sec:conclu}
In this work, we presented algorithms for the qualitative analysis of RMDPs with reachability and parity objectives. There are several interesting directions for future work. Whether the algorithmic upper bound for parity objectives can be further improved to polynomial time is a major open problem, even for turn-based games. While the present work establishes the theoretical foundations, another interesting direction is exploring efficient heuristics for parity algorithms for further scalability and practical applications.

\section*{Impact Statement}
This paper presents work whose goal is to advance the field of Machine Learning. There are many potential societal consequences of our work, none which we feel must be specifically highlighted here.

\section*{Acknowledgements}
This work was supported by ERC CoG 863818 (ForM-SMArt) and Austrian Science Fund (FWF) 10.55776/COE12. We also thank Hossein Zakerinia for his helpful feedback.

\bibliography{main}

\begin{thebibliography}{42}
\providecommand{\natexlab}[1]{#1}
\providecommand{\url}[1]{\texttt{#1}}
\expandafter\ifx\csname urlstyle\endcsname\relax
  \providecommand{\doi}[1]{doi: #1}\else
  \providecommand{\doi}{doi: \begingroup \urlstyle{rm}\Url}\fi

\bibitem[Baier \& Katoen(2008)Baier and Katoen]{Baier2008}
Baier, C. and Katoen, J.
\newblock \emph{Principles of model checking}.
\newblock {MIT} Press, 2008.

\bibitem[Baruah et~al.(1992)Baruah, Koren, Mao, Mishra, Raghunathan, Rosier,
  Shasha, and Wang]{Baruah92}
Baruah, S., Koren, G., Mao, D., Mishra, B., Raghunathan, A., Rosier, L.,
  Shasha, D., and Wang, F.
\newblock On the competitiveness of on-line real-time task scheduling.
\newblock \emph{Real-Time Systems}, 4:\penalty0 125--144, 1992.

\bibitem[Behzadian et~al.(2021)Behzadian, Petrik, and Ho]{BehzadianPH21}
Behzadian, B., Petrik, M., and Ho, C.~P.
\newblock Fast algorithms for
  {\textdollar}l{\_}{\textbackslash}infty{\textdollar}-constrained
  s-rectangular robust mdps.
\newblock In \emph{NeurIPS}, pp.\  25982--25992, 2021.

\bibitem[Chatterjee(2007)]{Chatterjee07}
Chatterjee, K.
\newblock Concurrent games with tail objectives.
\newblock \emph{Theoretical Computer Science}, 388\penalty0 (1-3):\penalty0
  181--198, 2007.

\bibitem[Chatterjee \& Henzinger(2006)Chatterjee and Henzinger]{ChatterjeeH06}
Chatterjee, K. and Henzinger, T.~A.
\newblock Strategy improvement and randomized subexponential algorithms for
  stochastic parity games.
\newblock In \emph{{STACS}}, volume 3884 of \emph{Lecture Notes in Computer
  Science}, pp.\  512--523. Springer, 2006.

\bibitem[Chatterjee \& Henzinger(2012)Chatterjee and Henzinger]{ChatterjeeH12}
Chatterjee, K. and Henzinger, T.~A.
\newblock A survey of stochastic {\(\omega\)}-regular games.
\newblock \emph{J. Comput. Syst. Sci.}, 78\penalty0 (2):\penalty0 394--413,
  2012.

\bibitem[Chatterjee et~al.(2003)Chatterjee, Jurdzinski, and
  Henzinger]{ChatterjeeJH03}
Chatterjee, K., Jurdzinski, M., and Henzinger, T.~A.
\newblock Simple stochastic parity games.
\newblock In \emph{{CSL}}, volume 2803 of \emph{Lecture Notes in Computer
  Science}, pp.\  100--113. Springer, 2003.

\bibitem[Chatterjee et~al.(2008)Chatterjee, Sen, and Henzinger]{ChatterjeeSH08}
Chatterjee, K., Sen, K., and Henzinger, T.~A.
\newblock Model-checking omega-regular properties of interval markov chains.
\newblock In \emph{FoSSaCS}, volume 4962 of \emph{Lecture Notes in Computer
  Science}, pp.\  302--317. Springer, 2008.

\bibitem[Chatterjee et~al.(2013)Chatterjee, K{\"o}{\ss}ler, and
  Schmid]{Chatterjee13}
Chatterjee, K., K{\"o}{\ss}ler, A., and Schmid, U.
\newblock Automated analysis of real-time scheduling using graph games.
\newblock In \emph{Proceedings of the 16th international conference on Hybrid
  systems: computation and control}, pp.\  163--172, 2013.

\bibitem[Chatterjee et~al.(2024)Chatterjee, Goharshady, Karrabi, Novotn{\'{y}},
  and Zikelic]{ChatterjeeGK0Z24}
Chatterjee, K., Goharshady, E.~K., Karrabi, M., Novotn{\'{y}}, P., and Zikelic,
  D.
\newblock Solving long-run average reward robust mdps via stochastic games.
\newblock In \emph{{IJCAI}}, pp.\  6707--6715. ijcai.org, 2024.

\bibitem[Clarke et~al.(2018)Clarke, Henzinger, Veith, and
  Bloem]{handbookmc2018}
Clarke, E.~M., Henzinger, T.~A., Veith, H., and Bloem, R. (eds.).
\newblock \emph{Handbook of Model Checking}.
\newblock Springer, 2018.

\bibitem[Durrett(2019)]{Durrett19}
Durrett, R.
\newblock \emph{Probability: theory and examples}, volume~49.
\newblock Cambridge university press, 2019.

\bibitem[Grand{-}Cl{\'{e}}ment \& Petrik(2023)Grand{-}Cl{\'{e}}ment and
  Petrik]{Grand-ClementP23}
Grand{-}Cl{\'{e}}ment, J. and Petrik, M.
\newblock Reducing blackwell and average optimality to discounted mdps via the
  blackwell discount factor.
\newblock In \emph{NeurIPS}, 2023.

\bibitem[Grand{-}Cl{\'{e}}ment et~al.(2023)Grand{-}Cl{\'{e}}ment, Petrik, and
  Vieille]{clementPV23}
Grand{-}Cl{\'{e}}ment, J., Petrik, M., and Vieille, N.
\newblock Beyond discounted returns: Robust markov decision processes with
  average and blackwell optimality.
\newblock \emph{CoRR}, abs/2312.03618, 2023.

\bibitem[Hartmanns et~al.(2019)Hartmanns, Klauck, Parker, Quatmann, and
  Ruijters]{HartmannsKPQR19}
Hartmanns, A., Klauck, M., Parker, D., Quatmann, T., and Ruijters, E.
\newblock The quantitative verification benchmark set.
\newblock In \emph{{TACAS} {(1)}}, volume 11427 of \emph{Lecture Notes in
  Computer Science}, pp.\  344--350. Springer, 2019.

\bibitem[Hasanbeig et~al.(2019)Hasanbeig, Kantaros, Abate, Kroening, Pappas,
  and Lee]{HasanbeigKAKPL19}
Hasanbeig, M., Kantaros, Y., Abate, A., Kroening, D., Pappas, G.~J., and Lee,
  I.
\newblock Reinforcement learning for temporal logic control synthesis with
  probabilistic satisfaction guarantees.
\newblock In \emph{{CDC}}, pp.\  5338--5343. {IEEE}, 2019.

\bibitem[Hau et~al.(2023)Hau, Delage, Ghavamzadeh, and Petrik]{HauDGP23}
Hau, J.~L., Delage, E., Ghavamzadeh, M., and Petrik, M.
\newblock On dynamic programming decompositions of static risk measures in
  markov decision processes.
\newblock In \emph{NeurIPS}, 2023.

\bibitem[Ho et~al.(2021)Ho, Petrik, and Wiesemann]{HoPW21}
Ho, C.~P., Petrik, M., and Wiesemann, W.
\newblock Partial policy iteration for l1-robust markov decision processes.
\newblock \emph{J. Mach. Learn. Res.}, 22:\penalty0 275:1--275:46, 2021.

\bibitem[Ho et~al.(2022)Ho, Petrik, and Wiesemann]{HoPW22}
Ho, C.~P., Petrik, M., and Wiesemann, W.
\newblock Robust {\textdollar}{\textbackslash}phi{\textdollar}-divergence mdps.
\newblock In \emph{NeurIPS}, 2022.

\bibitem[Iyengar(2005)]{Iyengar05}
Iyengar, G.~N.
\newblock Robust dynamic programming.
\newblock \emph{Math. Oper. Res.}, 30\penalty0 (2):\penalty0 257--280, 2005.
\newblock \doi{10.1287/MOOR.1040.0129}.
\newblock URL \url{https://doi.org/10.1287/moor.1040.0129}.

\bibitem[Jothimurugan et~al.(2021)Jothimurugan, Bansal, Bastani, and
  Alur]{JothimuruganBBA21}
Jothimurugan, K., Bansal, S., Bastani, O., and Alur, R.
\newblock Compositional reinforcement learning from logical specifications.
\newblock In \emph{NeurIPS}, pp.\  10026--10039, 2021.

\bibitem[Meggendorfer et~al.(2025)Meggendorfer, Weininger, and
  Wienhöft]{meggendorfer2024}
Meggendorfer, T., Weininger, M., and Wienhöft, P.
\newblock Solving robust markov decision processes: Generic, reliable,
  efficient.
\newblock In \emph{{AAAI}}, 2025.

\bibitem[Nemhauser \& Wolsey(1988)Nemhauser and Wolsey]{polyLP}
Nemhauser, G. and Wolsey, L.
\newblock \emph{Polynomial-Time Algorithms for Linear Programming}, pp.\
  146--181.
\newblock 1988.
\newblock \doi{https://doi.org/10.1002/9781118627372.ch6}.

\bibitem[Nilim \& Ghaoui(2003)Nilim and Ghaoui]{NilimG03}
Nilim, A. and Ghaoui, L.~E.
\newblock Robustness in markov decision problems with uncertain transition
  matrices.
\newblock In \emph{{NIPS}}, pp.\  839--846. {MIT} Press, 2003.

\bibitem[Parys(2019)]{Parys19}
Parys, P.
\newblock Parity games: Zielonka's algorithm in quasi-polynomial time.
\newblock In \emph{{MFCS}}, volume 138 of \emph{LIPIcs}, pp.\  10:1--10:13.
  Schloss Dagstuhl - Leibniz-Zentrum f{\"{u}}r Informatik, 2019.

\bibitem[Pogosyants et~al.(2000)Pogosyants, Segala, and Lynch]{Pogosyants00}
Pogosyants, A., Segala, R., and Lynch, N.
\newblock Verification of the randomized consensus algorithm of aspnes and
  herlihy: a case study.
\newblock \emph{Distributed Computing}, 13:\penalty0 155--186, 2000.

\bibitem[Puggelli et~al.(2013)Puggelli, Li, Sangiovanni{-}Vincentelli, and
  Seshia]{PuggelliLSS13}
Puggelli, A., Li, W., Sangiovanni{-}Vincentelli, A.~L., and Seshia, S.~A.
\newblock Polynomial-time verification of {PCTL} properties of mdps with convex
  uncertainties.
\newblock In \emph{{CAV}}, volume 8044 of \emph{Lecture Notes in Computer
  Science}, pp.\  527--542. Springer, 2013.

\bibitem[Puterman(1994)]{Puterman94}
Puterman, M.~L.
\newblock \emph{Markov Decision Processes: Discrete Stochastic Dynamic
  Programming}.
\newblock Wiley Series in Probability and Statistics. Wiley, 1994.

\bibitem[Sen et~al.(2006)Sen, Viswanathan, and Agha]{SenVA06}
Sen, K., Viswanathan, M., and Agha, G.
\newblock Model-checking markov chains in the presence of uncertainties.
\newblock In \emph{{TACAS}}, volume 3920 of \emph{Lecture Notes in Computer
  Science}, pp.\  394--410. Springer, 2006.

\bibitem[Suilen et~al.(2025)Suilen, Badings, Bovy, Parker, and
  Jansen]{Suilen2025}
Suilen, M., Badings, T., Bovy, E.~M., Parker, D., and Jansen, N.
\newblock \emph{Robust Markov Decision Processes: A Place Where AI and Formal
  Methods Meet}.
\newblock 2025.
\newblock \doi{10.1007/978-3-031-75778-5_7}.

\bibitem[Sutton \& Barto(2018)Sutton and Barto]{RLSutton}
Sutton, R.~S. and Barto, A.~G.
\newblock \emph{Reinforcement Learning: An Introduction}.
\newblock A Bradford Book, 2018.

\bibitem[Svoboda et~al.(2024)Svoboda, Bansal, and Chatterjee]{SvobodaBC24}
Svoboda, J., Bansal, S., and Chatterjee, K.
\newblock Reinforcement learning from reachability specifications: {PAC}
  guarantees with expected conditional distance.
\newblock In \emph{{ICML}}. OpenReview.net, 2024.

\bibitem[Tewari \& Bartlett(2007)Tewari and Bartlett]{TewariB07}
Tewari, A. and Bartlett, P.~L.
\newblock Bounded parameter markov decision processes with average reward
  criterion.
\newblock In \emph{{COLT}}, volume 4539 of \emph{Lecture Notes in Computer
  Science}, pp.\  263--277. Springer, 2007.

\bibitem[Towers et~al.(2023)Towers, Terry, Kwiatkowski, Balis, Cola, Deleu,
  Goulão, Kallinteris, KG, Krimmel, Perez-Vicente, Pierré, Schulhoff, Tai,
  Shen, and Younis]{OpenAIGym}
Towers, M., Terry, J.~K., Kwiatkowski, A., Balis, J.~U., Cola, G.~d., Deleu,
  T., Goulão, M., Kallinteris, A., KG, A., Krimmel, M., Perez-Vicente, R.,
  Pierré, A., Schulhoff, S., Tai, J.~J., Shen, A. T.~J., and Younis, O.~G.
\newblock Gymnasium, March 2023.
\newblock URL \url{https://zenodo.org/record/8127025}.

\bibitem[Walley(1996)]{Walley96}
Walley, P.
\newblock Measures of uncertainty in expert systems.
\newblock \emph{Artif. Intell.}, 83\penalty0 (1):\penalty0 1--58, 1996.

\bibitem[Wang et~al.(2023{\natexlab{a}})Wang, Ho, and Petrik]{WangHP23}
Wang, Q., Ho, C.~P., and Petrik, M.
\newblock Policy gradient in robust mdps with global convergence guarantee.
\newblock In \emph{{ICML}}, volume 202 of \emph{Proceedings of Machine Learning
  Research}, pp.\  35763--35797. {PMLR}, 2023{\natexlab{a}}.

\bibitem[Wang \& Zou(2021)Wang and Zou]{WangZ21}
Wang, Y. and Zou, S.
\newblock Online robust reinforcement learning with model uncertainty.
\newblock In \emph{NeurIPS}, pp.\  7193--7206, 2021.

\bibitem[Wang \& Zou(2022)Wang and Zou]{WangZ22c}
Wang, Y. and Zou, S.
\newblock Policy gradient method for robust reinforcement learning.
\newblock In \emph{{ICML}}, volume 162 of \emph{Proceedings of Machine Learning
  Research}, pp.\  23484--23526. {PMLR}, 2022.

\bibitem[Wang et~al.(2023{\natexlab{b}})Wang, Velasquez, Atia,
  Prater{-}Bennette, and Zou]{WangVAPZ23}
Wang, Y., Velasquez, A., Atia, G.~K., Prater{-}Bennette, A., and Zou, S.
\newblock Robust average-reward markov decision processes.
\newblock In \emph{{AAAI}}, pp.\  15215--15223. {AAAI} Press,
  2023{\natexlab{b}}.

\bibitem[Wang et~al.(2024)Wang, Velasquez, Atia, Prater{-}Bennette, and
  Zou]{WangVAPZ24}
Wang, Y., Velasquez, A., Atia, G.~K., Prater{-}Bennette, A., and Zou, S.
\newblock Robust average-reward reinforcement learning.
\newblock \emph{J. Artif. Intell. Res.}, 80:\penalty0 719--803, 2024.

\bibitem[Winkler et~al.(2019)Winkler, Junges, P{\'{e}}rez, and
  Katoen]{WinklerJPK19}
Winkler, T., Junges, S., P{\'{e}}rez, G.~A., and Katoen, J.
\newblock On the complexity of reachability in parametric markov decision
  processes.
\newblock In \emph{{CONCUR}}, volume 140 of \emph{LIPIcs}, pp.\  14:1--14:17.
  Schloss Dagstuhl - Leibniz-Zentrum f{\"{u}}r Informatik, 2019.

\bibitem[Zielonka(1998)]{Zielonka98}
Zielonka, W.
\newblock Infinite games on finitely coloured graphs with applications to
  automata on infinite trees.
\newblock \emph{Theor. Comput. Sci.}, 200\penalty0 (1-2):\penalty0 135--183,
  1998.

\end{thebibliography}
\bibliographystyle{icml2025}
\newpage\clearpage
\appendix
\onecolumn
\section{Oracles for Standard Uncertainty Sets} \label{app:oracles}

In this section, we demonstrate the oracles for several well-known uncertainty sets, namely $L_d$ for $d \in \N$, $L_\infty$, linearly defined and polytopic uncertainty sets. 

\subsection{$L_d$ Uncertainty}
Let $R >0$ and suppose the uncertainty sets of RMDP $\rmdp = (\states, \actions, \uncert,\sinit)$ is defined as follows:
\[
\uncert(s,a) = \{\mathbf{p} \in \Delta(\states) \mid \| \mathbf{p}- \mathbf{\bar{p}}_{s,a}\|_d \leq R\}
\]
where $\mathbf{\bar{p}_{s,a}} \in \uncert(s,a)$ is the center of the ambiguity set. Equivalently, the uncertainty set can be written as follows:
\[
\uncert(s,a) = \{\mathbf{p} \in \Delta(\states) \mid \sum_{t \in \states} |\mathbf{p}[t]- \mathbf{\bar{p}}_{s,a}[t]|^d \leq R^d\}
\]
The oracles for this uncertainty set can be computed as follows:
\begin{algorithm}
   \caption{$\force^\rmdp_\agent(s,B)$ for $L_d$ uncertainty set with radius $R$}
   \label{alg:force-agent-Ld}
\begin{algorithmic}[1]
    \IF{$B\equiv\states \wedge \actions(s) \neq \varnothing$ }
    \STATE \textbf{return} $\true$
    \ENDIF
    \FORALL{$a \in \actions(s)$}
    \STATE $c = \bar{p}_{s,a}[B]$ 
    \STATE $k_1 = (\frac{c}{|S|-|B|})^d \times (|S|-|B|)$  \\\COMMENT{cost of increasing $\mathbf{\bar{p}}_{s,a}[t]$ by $\frac{c}{|S|-|B|}$ for $t \notin B$}
    \STATE $k_2 = \sum\limits_{b \in B} \mathbf{\bar{p}}_{s,a}[b]^d$\\ \COMMENT{cost of decreasing $\mathbf{\bar{p}}_{s,a}[b]$ to 0 for $b \in B$}
    \IF{$k_1+k_2>R^d$}
    \STATE \textbf{return} $\true$
    \ENDIF
    \ENDFOR
    \STATE \textbf{return} $\false$
\end{algorithmic}
\end{algorithm}

The intuition behind the above algorithm is as follows. Suppose the agent wants to force the RMDP to reach $B$ from $s$ in one step, i.e. $\force^\rmdp_\agent(s,B)$. Then he should choose an action $a \in \actions(s)$ such that the environment cannot choose $\delta \in \uncert(s,a)$ where $\delta[B]=0$. In order to choose such $\delta$, the environment has to decrease the entries $\bar{\mathbf{p}}_{s,a}[b]$ to 0, for every $b \in B$. This results a distance $k_2$ between $\delta$ and $\bar{\mathbf{p}}_{s,a}$. She then has to distribute $c$ amount of probability among states of $\states \setminus B$. Due to the convexity of the $\|\cdot\|_d$ norm, the optimal choice is to distribute the mass equally among $\states \setminus B$. This incurs a total distance of $k_1$ between $\delta$ and $\bar{\mathbf{p}}_{s,a}$. Therefore, if $\sqrt[d]{k_1+k_2}>R$, the environment cannot choose any member of $\uncert(s,a)$ to guarantee $B$ not being reached in one step, hence $\force^\rmdp_\agent(s,B) \equiv \true$.

In $L_d$ uncertainty sets, the environment can always manipulate the probabilities in $\bar{\mathbf{p}}_{s,a}$, to ensure that $B$ is reached with nonzero probability in one step, which makes the computation of $\force^\rmdp_\env(s, B)$ trivial, that is, $\force^\rmdp_\env(s, B) \equiv \true$. To make the environment less powerful, a similar family of uncertainty sets is studied in the literature~\cite{meggendorfer2024,ChatterjeeGK0Z24,HoPW21} where the uncertainty set does not allow the support of $\bar{\mathbf{p}}_{s,a}$ to expand, i.e.

\[
\begin{split}
\uncert(s,a) = \{\mathbf{p} \in \Delta(\states) \mid &\supp(\mathbf{p})\subseteq \supp(\mathbf{\bar{p}_{s,a}})  \\
&\wedge \| \mathbf{p}- \mathbf{\bar{p}_{s,a}}\|_d \leq R\}
\end{split}
\]
In both cases, both $\force^\rmdp_\agent$ and $\force^\rmdp_\env$ can be computed in polynomial time.

\subsection{$L_\infty$ Uncertainty}

Given a radius $R>0$, the $L_\infty$ uncertainty set is defined similar to $L_d$ uncertainty sets, by replacing the $L_d$ norm $\|\cdot\|_d$ by the $L_\infty$ norm $\|\cdot\|_\infty$ as follows:
\[
\begin{split}
\uncert(s,a) &= \{\mathbf{p} \in \Delta(\states) \mid \| \mathbf{p}- \mathbf{\bar{p}}_{s,a}\|_\infty \leq R\} \\
&= \{\mathbf{p} \in \Delta(\states) \mid \| \max_{t \in \states} |\mathbf{p}[t] - \mathbf{\bar{p}}_{s,a}[t]| \leq R\}
\end{split}
\]

The method for computing $\force^\rmdp_\agent(s,B)$ for $L_\infty$ uncertainty sets is similar to Algorithm \ref{alg:force-agent-Ld}. The details are as in Algorithm \ref{alg:force-agent-Linf}.

\begin{algorithm}
   \caption{$\force^\rmdp_\agent(s,B)$ for $L_\infty$ uncertainty set with radius $R$}
   \label{alg:force-agent-Linf}
\begin{algorithmic}[1]
    \IF{$B\equiv\states \wedge \actions(s) \neq \varnothing$ }
    \STATE \textbf{return} $\true$
    \ENDIF
    \FORALL{$a \in \actions(s)$}
    \STATE $c = \bar{p}_{s,a}[B]$ 
    \STATE $k_1 = \frac{c}{|S|-|B|}$  
    \STATE $k_2 = \max\limits_{b \in B} \mathbf{\bar{p}}_{s,a}[b]$
    \IF{$\max(k_1,k_2)>R$}
    \STATE \textbf{return} $\true$
    \ENDIF
    \ENDFOR
    \STATE \textbf{return} $\false$
\end{algorithmic}
\end{algorithm}

The intuition is similar to that of $L_d$ uncertainties: If the agent chooses action $a \in \actions(s)$, the environment has to decrease the probabilities in $\mathbf{\bar{p}}_{s,a}$ to zero. This incurs a cost of $k_2$ in terms of $L_\infty$ distance. She then has to distribute $c$ units of probability mass among the states of $\states \setminus B$. The optimal choice in this case is equal distribution, which incurs $k_1$ cost.

\subsection{Linearly Defined Uncertainty}
Suppose the uncertainty sets of RMDP $\rmdp$ are defined as follows:
\[
\uncert(s,a) = \{ \mathbf{p} \in \Delta(\states) \mid A_{s,a} \mathbf{p} \leq b_{s,a} \}
\]

where $A_{s,a}$ is a real matrix and $b_{s,a}$ is a vector. Then $\force^\rmdp_\agent(s,B)$ can be computed by checking whether $a \in \actions(s)$ exists such that the following system of linear inequalities has no solution:
\[
\begin{split}
    \sum_{s \in \states} p[s] = 1 \\ 
    A_{s,a} \mathbf{p} \leq b_{s,a} \\ 
    \sum_{b \in B} p[b] = 0  
\end{split}
\]
Furthermore, $\force^\rmdp_\env$ can be computed by checking whether for every action $a \in \actions(s)$ the following system of linear inequalities has a solution:
\[
\begin{split}
    \sum_{s \in \states} p[s] = 1 \\ 
    A_{s,a} \mathbf{p} \leq b_{s,a} \\ 
    \sum_{b \in B} p[b] > 0  
\end{split}
\]

In both cases, the computation can be done in polynomial time by solving a system of linear inequalities \cite{polyLP}. Polytopic uncertainty sets are a special case of linear defined uncertainty sets, hence their $\force$ functions can be computed in polynomial time as well. 

\section{Pseudocode of $\PA_\env$}
\label{app:pattre}
As mentioned in the main body, the algorithm for computing $\PA_\env$ is similar to that of $\PA_\agent$. The details of the algorithm is illustrated in Algorithm \ref{alg:pattr-env}.
\begin{algorithm}
   \caption{$\PA_\env(\rmdp,T)$ Positive Attractor for the environment}
   \label{alg:pattr-env}
\begin{algorithmic}[1]
   \STATE {\bfseries Input:} RMDP $\rmdp$ and Target set $T$.
   \STATE Initialize $T_0 = T$, $i=0$.
   \REPEAT
   \STATE $i = i+1$
   \STATE $T_{i} = T_{i-1} \cup \{s \in \states | \force^\rmdp_\env(s,T_{i-1})\}$
   \UNTIL{$T_i \equiv T_{i-1}$}
   \STATE \textbf{return} $T_i$
\end{algorithmic}
\end{algorithm}

\section{Proof of Lemma \ref{lem:attr}} \label{app:lemm:attr}

For every $s \in \states$ and $B \subseteq \states$ such that $\force^\rmdp_\agent(s,B)$, let $a_{s,B} \in A(s)$ be such that for all $\delta \in \uncert(s,a)$, it holds that $\delta[B]>0$.  Furthermore, let $p_\agent$ be defined as follows:
\[
p_\agent := \min_{\substack{\force^\rmdp_\agent(s,B) \\  \trans \in \uncert(s,a_{s,B})}} \trans[B]
\]
Because of the assumption that the uncertainty set $\uncert(s,a)$ is compact, $p_m$ is well-defined and strictly positive. Specifically, it holds that if $\force^\rmdp_\agent(s,B)$ for some $s \in \states$ and $B \subseteq \states$, then $\trans[B]>0$ for any $\trans \in \uncert(s,a_{s,B})$.

Analogously, for every $s\in \states$ and $B \subseteq \states$ such that $\force^\rmdp_\env(s,B)$, let $\trans_{s,a,B} \in \uncert(s,a)$ be such that $\trans_{s,a,B}=\argmax_{\delta \in \uncert(s,a)} \trans[B]$. Moreover, let $p_\env$ be as follows:
\[
p_\env := \min_{\substack{\force^\rmdp_\env(s,B) \\  a \in A }} \trans_{s,a,B}[B]
\]
similar to the previous case, because of the compactness assumption over $\uncert(s,a)$ for all $s,a$, it is implied that $p_\env$ is well-defined and strictly positive.

\lemmapattr*

\begin{proof}
    \textbf{(Correctness of $\PA_\agent$, $\Rightarrow$)} We prove by induction on $i$ that if $s \in T_i$, then there exists an agent policy $\agentpol$ such that $\val^{s, \agentpol}_{\rmdp}(\reach(T)) \geq p_\agent^i$.
    
    For $i=0$, the set $T_0$ contains all the target states, therefore  for every $s\in T_0$ and agent policy $\agentpol$, it holds that $\val^{s, \agentpol}_{\rmdp}(\reach(T))=1=p_\agent^0$.

    Now consider $T_i$ and let $s \in T_i \setminus T_{i-1}$. By definition it follows that $\force^\rmdp_\agent(s,T_{i-1})=\true$ which implies the existence of action $a_{s,T_{i-1}}$ such that for every $\trans \in \uncert(s,a_{s,T_{i-1}})$, the inequality $\trans[T_{i-1}]\geq p_\agent$ is satisfied. Therefore, if the agent chooses $a_{s,T_{i-1}}$ in the first step, the RMDP will reach $T_{i-1}$ in one step with probability at least $p_\agent$ regardless of the environment's choice. The agent can then follow the policy from $T_{i-1}$ that guarantees reachability of $T$ with probability at least $p_\agent^{i-1}$, implying a total probability of $p_\agent^i$ for reaching the target $T$. 

    \textbf{(Correctness of $\PA_\agent$, $\Leftarrow$)} We prove the contra-positive of the statement: If $s \notin \PA_\agent(\rmdp,T)$, then for every agent policy $\agentpol$, there exists an environment policy $\envpol$ such that $\prob^{\agentpol,\envpol}_{\rmdp}(s)[\reach(T)]=0$.

    Let $B= \states \setminus \PA_\agent(\rmdp,T)$. By definition, for every $s \in B$, it holds that $\force^\rmdp_\agent(s,\PA_\agent(\rmdp,T))=\false$. Therefore, for any action $a\in \actions(s)$ that the agent chooses, the environment can choose $\trans_{s,a} \in \uncert(s,a)$ such that $\trans_{s,a}[\PA_\agent(\rmdp,T)]=0$, or equivalently that $\trans[B]=1$. By always choosing $\trans_{s,a}$, the environment can guarantee that $\PA_\agent(\rmdp,T)$ is never reached, hence the target $T$ is reached with probability 0. 

    \textbf{(Correctness of $\PA_\env$, $\Rightarrow$)} We prove by induction on $i$ that if $s \in T_i$, then for every agent policy $\agentpol$, it holds that $\val^{s, \agentpol}_{\rmdp}(\reach(T)) \geq p_\env^i$.
    
    For $i=0$, the set $T_0$ contains all the target states, therefore  for every $s\in T_0$ and agent policy $\agentpol$, it holds that $\val^{s, \agentpol}_{\rmdp}(\reach(T))=1=p_\env^0$.

    Now consider $T_i$ and let $s \in T_i \setminus T_{i-1}$. By the definition of Algorithm \ref{alg:pattr-env}, it follows that $\force^\rmdp_\env(s,T_{i-1})=\true$. Suppose the RMDP starts its run at $s$, and the agent chooses action $a$ at the beginning. The environment can then choose $\delta_{s,a,T_{i-1}}$ so that the RMDP progresses to $T_{i-1}$ with probability at least $p_\env$. The environment can then follow the policy that guarantees reachability of $T$ with probability at least $p_\env^{i-1}$ from $T_{i-1}$, implying that $T$ is reached with a total probability of at least $p_\env^i$. 

    \textbf{(Correctness of $\PA_\env$, $\Leftarrow$)} We prove the contra-positive of the statement: If $s \notin \PA_\env(\rmdp,T)$, then there exists an agent policy $\agentpol$, such that for every environment policy $\envpol$ $\prob^{\agentpol,\envpol}_{\rmdp}(s)[\reach(T)]=0$.

    Let $B = \states \setminus \PA_\env(\rmdp,T)$. For every $s \in B$, it follows from definitions that $\force^\rmdp_\env(s,\PA_\env(\rmdp,T))=\false$. Therefore, for every $s \in B$, there exists action $a_s \in \actions(s)$ such that despite the environment's choice $\trans \in \uncert(s,a_s)$, it holds that $\trans[\PA_\env(\rmdp,T)]=0$, or equivalently that $\trans[B]=1$. Therefore, by always choosing $a_s$, the agent can guarantee that $\PA_\env(\rmdp,T)$ is never reached. Hence the target $T$ is reached with probability 0. 

    \textbf{(Oracle Complexity)} Each loop iteration removes at least one state from $\rmdp$. Therefore, the procedures terminate after at most $|\states|$ iterations. In each step, the algorithms use at most $|\states|$ oracle calls to compute $T_i$. Hence, the total number of oracle calls is $O(|\states|)$.
\end{proof}

\section{Proofs of Lemma~\ref{lem:reachisas} and  Theorem~\ref{thm:reach}}
\label{app:lem:reachisas,thm:reach}

Given an RMDP $\rmdp=(\states,\actions, \uncert)$ and a target set $T \subseteq \states$, let $\reach_{\leq k}(T)$ define the set of all runs that reach a state in $T$ in less than or equal to $k$ steps: 
    \[
    \reach_{\leq k}(T) = \{s_0,a_0, s_1,a_1, \dots \in \runs_\rmdp | \exists i \leq k: s_i \in T \}
    \]

\lemmareach*

\begin{proof}
    Let $s \in \states$.
    Then $k_s \in \N$ exists such that $p_s = \val^{s,\agentpol}_{\rmdp}\left(\reach_{\leq k_s}(T)\right)> 0$. Let $k=\max\limits_{s \in \states} k_s$ and $p = \min\limits_{s \in \states} p_s$. 
    Following the policy $\agentpol$, in every $k$ steps, the target set $T$ is reached with probability at least $p$. Therefore, $T$ is eventually reached almost-surely, implying that $\val^{s,\agentpol}_{\rmdp}\left(\reach(T)\right)=1$.
\end{proof}

\theoremreach*

\begin{proof}
    \textbf{(Correctness)} We prove the correctness in two parts: \\
    \textbf{Claim 1.} If $s \in \asreach(\rmdp,T)$, then $\val^{s}_{\rmdp}\left(\reach(T)\right)=1$. To see this, let $\rmdp' = \rmdp \setminus Z$ be the RMDP after one loop iteration of the algorithm. Note that $\rmdp'$ is an RMDP where the agent can always stay there and the environment cannot enforce to leave. Therefore, if for some $s \in \states$, it holds that $\val^{s}_{\rmdp'}\left(\reach(T)\right)=1$, then $\val^{s}_{\rmdp}\left(\reach(T)\right)=1$. Lastly, when the algorithm terminates, $\states = \PA_\agent(\rmdp,T)$, by Lemma~\ref{lem:attr}, there exists a pure memoryless policy for the agent $\agentpol$ such that every state of $\states$ can reach $T$ with probability at least $p_\agent^{|\states|}$ by following the policy $\agentpol$. Applying Lemma \ref{lem:reachisas} finishes the proof of the claim.

    \textbf{Claim 2.} If $s \notin \asreach(\rmdp,T)$, then for every agent policy $\agentpol$, $\val^{s,\agentpol}_{\rmdp}\left(\reach(T)\right)<1$. Let $\rmdp$ be any RMDP and consider one iteration of the loop being applied on $\rmdp$. Firstly, note that if $s \in B$, then, due to Lemma \ref{lem:attr}, the environment has a policy $\envpol$ to prevent $\rmdp$ from reaching $T$, starting at $s$. Secondly, if $s \in Z$, then, starting at $s$, the environment has a policy to make $\rmdp$ reach $B$ with non-zero probability, hence $\val^{s,\agentpol}_{\rmdp}\left(\reach(T)\right)<1$ for any agent policy $\agentpol$. This shows that for every state $s$ removed during an iteration of the loop in Algorithm \ref{alg:reach}, $\val^{s,\agentpol}_{\rmdp}\left(\reach(T)\right)<1$.
    
    \textbf{(Oracle Complexity)} In each loop iteration, at least one state is being removed from $\rmdp$, this cannot happen more than $|\states|$ times, therefore the algorithm stops after at most $|\states|$ iterations. Each iteration of the loop calls the $\PA$ functions twice, each $\PA$ call uses the oracles at most $|\states|^2$ times (Lemma~\ref{lem:attr}), therefore the algorithm makes $O(|\states|^3)$ oracle calls in total. 

    \textbf{(Almost-Sure vs Limit-Sure)} In the proof of Claim 1, the pure memoryless policy $\agentpol$ ensures that for all states $s$ such that $\val^{s}_{\rmdp}(\reach(T)) = 1$, we have $\val^{s, \agentpol}_{\rmdp}(\reach(T)) = 1$, which completes the proof.
\end{proof}

\section{Pseudocode of $\asparity_\env$}
\label{app:asparity-env}
\paragraph{Parity Objective of the Environment.} Let $c\colon \states \to \{0, 1, \ldots, d\}$ be a function assigning priorities to states of $\rmdp$. A run $\run$ is included in $\coparity(c)$, if the maximum priority visited infinitely often in $\run$ is {\em odd}:
    \[
    \coparity(c) = \{\run \in \runs_\rmdp | \max\left\{c\left(\inf(\pi)\right)\right\} \text{ is odd}\}.
    \]
Note that parity objectives of the agent and the environment are complementary, i.e., $\parity(c) \cup \coparity(c) = \runs_\rmdp$.

\paragraph{Almost-Sure Parity for Environment.} Given a priority function $c\colon \states \to \N$, find a policy $\envpol^*$ for the environment such that for all agent polices $\agentpol$ we have  $\prob^{\agentpol,\envpol^*}_{\rmdp}(s_0)\left[\coparity(c)\right]=1$, or prove that such policy does not exist. 

As mentioned in the main body, the algorithm for solving almost-sure parity for environment $\asparity_\env$ is similar to that of $\asparity_\agent$. The details of the algorithm are illustrated in Algorithm \ref{alg:parity-zielonka-env}. We assume that the maximum priority $d$ is always odd when the procedure is called (if $d$ is even, we increase it by 1).

\begin{algorithm}
   \caption{$\asparity_\env(\rmdp,c, d)$ Almost-Sure Parity for Environment}
   \label{alg:parity-zielonka-env}
\begin{algorithmic}[1]
   \STATE {\bfseries Input:} RMDP $\rmdp=(\states,\actions, \uncert)$, priority function~$c$, maximum priority $d$
   \IF{$\states = \varnothing$}
   \STATE \textbf{return} $\varnothing$
   \ENDIF
   
   \REPEAT
   \STATE $\states_d = \{s \in \states | c(s) = d\}$   \COMMENT{maximum priority states}
   \STATE $B = \states \setminus \PA_\env(\rmdp,\states_d)$   \COMMENT{cannot reach priority-$d$}
   \STATE $\rmdp' = \rmdp_{|B}$
   \STATE $W_\agent = \asparity_\agent(\rmdp', c, d-1)$ \COMMENT{solve for agent}
   \STATE $G =  \PA_\agent(\rmdp,W_\agent)$ \COMMENT{does not satisfy parity almost-surely for environment}
   \STATE $\rmdp = \rmdp \setminus G$ 
   \UNTIL{$W_\agent \equiv \varnothing$}
   
   \STATE \textbf{return} $\states$
\end{algorithmic}
\end{algorithm}

\section{Proofs of Lemma~\ref{lem:parityisas} and Theorem~\ref{thm:parity}}
\label{app:thm:parity}

In this section, we prove Lemma~\ref{lem:parityisas} and Theorem~\ref{thm:parity}. Lemma~\ref{lem:parityisas} is inspired by~\cite{Chatterjee07}, which proved a similar result on concurrent stochastic games with tail objectives. To prove the lemma, we first introduce L\'evy's zero-one law in Lemma~\ref{app:levys} which is a tool from probability theory and then present lemma~\ref{app:zolawrmdp} as an immediate consequence of lemma~\ref{app:levys} in our setting.

\begin{lemma}[L\'evy's zero-one law]
    \label{app:levys}
    For a probability space $(\Omega, F, P)$, let $\mathcal{H}_1, \dots$ be a sequence of increasing $\sigma$-fields, and $\mathcal{H}_\infty = \sigma(\bigcup_{n} \mathcal{H}_n)$. Then, for all events $\mathcal{A} \in \mathcal{H}_\infty$, we have:

    $$\mathbb{E}[\textbf{1}_\mathcal{A} | \mathcal{H}_\infty] = \prob(\mathcal{A} | \mathcal{H}_\infty) \xrightarrow[]{n \rightarrow \infty} \textbf{1}_\mathcal{A} \text{;   almost-surely}.$$
\end{lemma}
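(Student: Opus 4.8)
The plan is to recognize the sequence $X_n := \mathbb{E}[\mathbf{1}_\mathcal{A} \mid \mathcal{H}_n]$ as a martingale adapted to the filtration $(\mathcal{H}_n)_n$ and then invoke martingale convergence together with a standard identification of the limit. First I would verify the martingale property: by the tower rule for conditional expectations and the inclusion $\mathcal{H}_n \subseteq \mathcal{H}_{n+1}$, one has $\mathbb{E}[X_{n+1} \mid \mathcal{H}_n] = \mathbb{E}[\mathbb{E}[\mathbf{1}_\mathcal{A} \mid \mathcal{H}_{n+1}] \mid \mathcal{H}_n] = \mathbb{E}[\mathbf{1}_\mathcal{A} \mid \mathcal{H}_n] = X_n$. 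Since $\mathbf{1}_\mathcal{A} \in [0,1]$, every $X_n$ takes values in $[0,1]$, so the martingale is uniformly bounded and in particular uniformly integrable.

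Next I would apply Doob's martingale convergence theorem to the bounded martingale $(X_n)_n$, obtaining an $\mathcal{H}_\infty$-measurable limit $X_\infty$ with $X_n \to X_\infty$ both almost surely and in $L^1$ (the $L^1$ convergence following from uniform integrability). It then remains to show that this limit equals $\mathbf{1}_\mathcal{A}$ almost surely.

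The crux is to identify $X_\infty$ with $\mathbb{E}[\mathbf{1}_\mathcal{A} \mid \mathcal{H}_\infty]$. For this I would fix $m$ and any $H \in \mathcal{H}_m$; for every $n \ge m$ the defining property of conditional expectation gives $\mathbb{E}[X_n \mathbf{1}_H] = \mathbb{E}[\mathbf{1}_\mathcal{A}\mathbf{1}_H]$, since $H \in \mathcal{H}_n$. Passing to the limit $n \to \infty$ using the $L^1$ convergence yields $\mathbb{E}[X_\infty \mathbf{1}_H] = \mathbb{E}[\mathbf{1}_\mathcal{A} \mathbf{1}_H]$. This equality now holds for every $H$ in the algebra $\bigcup_m \mathcal{H}_m$, which is a $\pi$-system generating $\mathcal{H}_\infty$; by the $\pi$--$\lambda$ (monotone class) theorem it extends to all $H \in \mathcal{H}_\infty$. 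As $X_\infty$ is $\mathcal{H}_\infty$-measurable, this characterizes it as $X_\infty = \mathbb{E}[\mathbf{1}_\mathcal{A} \mid \mathcal{H}_\infty]$ almost surely. Finally, because $\mathcal{A} \in \mathcal{H}_\infty$ the indicator $\mathbf{1}_\mathcal{A}$ is itself $\mathcal{H}_\infty$-measurable, whence $\mathbb{E}[\mathbf{1}_\mathcal{A} \mid \mathcal{H}_\infty] = \mathbf{1}_\mathcal{A}$, and the claimed almost-sure convergence $X_n \to \mathbf{1}_\mathcal{A}$ follows.

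The main obstacle I anticipate is this limit-identification step: one must justify interchanging the limit with the integral, which requires the $L^1$ (not merely almost-sure) convergence supplied by uniform integrability, and then upgrade equality of integrals from the generating $\pi$-system $\bigcup_m \mathcal{H}_m$ to the full $\sigma$-field $\mathcal{H}_\infty$ through a monotone-class argument. By comparison, the martingale property and the boundedness that powers the convergence theorem are routine.
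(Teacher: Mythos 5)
Your proof is correct, and it is the canonical textbook argument for L\'evy's zero-one law: the bounded martingale $X_n=\mathbb{E}[\mathbf{1}_{\mathcal{A}}\mid\mathcal{H}_n]$ converges almost surely and in $L^1$ by Doob's theorem plus uniform integrability, the limit is identified with $\mathbb{E}[\mathbf{1}_{\mathcal{A}}\mid\mathcal{H}_\infty]$ by passing to the limit in $\mathbb{E}[X_n\mathbf{1}_H]=\mathbb{E}[\mathbf{1}_{\mathcal{A}}\mathbf{1}_H]$ for $H$ in the generating $\pi$-system $\bigcup_m\mathcal{H}_m$ and extending via the $\pi$--$\lambda$ theorem, and the result follows since $\mathcal{A}\in\mathcal{H}_\infty$ makes $\mathbb{E}[\mathbf{1}_{\mathcal{A}}\mid\mathcal{H}_\infty]=\mathbf{1}_{\mathcal{A}}$. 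The paper gives no proof of this lemma---it is invoked as a standard tool from probability theory---so your derivation simply supplies the standard argument; note also that you correctly read the conditioning $\sigma$-field in the displayed convergence as $\mathcal{H}_n$ (the ``$\mathcal{H}_\infty$'' appearing there in the paper's statement is evidently a typo, since otherwise the limit in $n$ would be vacuous).
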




\begin{lemma}[zero-one law in RMDPs]
    \label{app:zolawrmdp}
    Given an RMDP $\rmdp$ and a priority function $c$, for all policies $\agentpol$ and $\envpol$, and for all states $s$, we have:
    $$\prob^{\agentpol,\envpol}_\rmdp(s)[\parity(c) | h_n] \xrightarrow[]{n \rightarrow \infty} 0 \text{ or } 1 \text{;  with probability 1.}$$
\end{lemma}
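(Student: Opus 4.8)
The plan is to derive Lemma~\ref{app:zolawrmdp} directly as a special case of L\'evy's zero-one law (Lemma~\ref{app:levys}). The parity objective $\parity(c)$ is a tail event: whether the maximum priority visited infinitely often is even depends only on the infinite suffix of a run, so it is measurable with respect to the tail $\sigma$-field and in particular it lies in $\mathcal{H}_\infty = \sigma(\bigcup_n \mathcal{H}_n)$, where $\mathcal{H}_n$ is the $\sigma$-field generated by the history $h_n = s_0,a_0,\dots,s_{n-1},a_{n-1}$ up to step $n$. Once the policies $\agentpol$ and $\envpol$ are fixed, the measure $\prob^{\agentpol,\envpol}_\rmdp(s)$ is a genuine probability measure over runs, so we are in the setting of an ordinary probability space $(\Omega, F, P)$ with an increasing filtration $\{\mathcal{H}_n\}$ whose limit generates the event~$\parity(c)$.

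First I would set up the probability space explicitly: take $\Omega = \runs_\rmdp$, $P = \prob^{\agentpol,\envpol}_\rmdp(s)$, and let $\mathcal{H}_n$ be the sub-$\sigma$-field generated by the cylinder sets determined by histories of length $n$ (this matches the cylinder-set construction referenced after the definition of $\prob^{\agentpol,\envpol}_\rmdp(s_0)$). These fields are increasing since a longer history refines a shorter one, and $\mathcal{H}_\infty$ is by definition $\sigma(\bigcup_n \mathcal{H}_n)$. Next I would verify the membership $\parity(c) \in \mathcal{H}_\infty$: the set $\inf(\pi)$ of infinitely-often-visited states, and hence $\max\{c(\inf(\pi))\}$, is unchanged by altering any finite prefix of a run, so $\parity(c)$ is a tail event and is measurable with respect to $\mathcal{H}_\infty$. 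Then I would simply invoke Lemma~\ref{app:levys} with $\mathcal{A} = \parity(c)$, which yields
\[
\prob^{\agentpol,\envpol}_\rmdp(s)[\parity(c) \mid h_n] \xrightarrow[n \to \infty]{} \mathbf{1}_{\parity(c)} \quad \text{almost-surely.}
\]
The indicator $\mathbf{1}_{\parity(c)}$ takes only the values $0$ and $1$, so the limit of the conditional probabilities is almost-surely either $0$ or $1$, which is exactly the claimed statement.

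I do not expect a serious obstacle here, since the lemma is explicitly framed as an ``immediate consequence'' of L\'evy's law; the only real content is a careful bookkeeping argument. The point requiring the most care is confirming that $\parity(c)$ genuinely belongs to $\mathcal{H}_\infty$ rather than merely to the tail field, and that conditioning on $h_n$ corresponds precisely to conditioning on $\mathcal{H}_n$. Since every cylinder set is $\mathcal{H}_n$-measurable for $n$ large enough and $\parity(c)$ is a countable Boolean combination of such ``visited infinitely often'' events (each of which is a $\limsup$ of cylinder events), $\parity(c)$ is $\sigma(\bigcup_n \mathcal{H}_n)$-measurable, so the hypothesis $\mathcal{A} \in \mathcal{H}_\infty$ of Lemma~\ref{app:levys} is met. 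With that measurability check in place, the conclusion follows with no further computation.
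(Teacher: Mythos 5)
Your proposal is correct and takes essentially the same route as the paper: the paper likewise obtains Lemma~\ref{app:zolawrmdp} by directly invoking L\'evy's zero-one law (Lemma~\ref{app:levys}) with $\mathcal{A} = \parity(c)$, treating the result as an immediate consequence. The bookkeeping you supply---the cylinder-set filtration $\{\mathcal{H}_n\}$ under the fixed measure $\prob^{\agentpol,\envpol}_\rmdp(s)$ and the check that $\parity(c)$ is $\mathcal{H}_\infty$-measurable as a Boolean combination of ``visited infinitely often'' events---is exactly the detail the paper leaves implicit.
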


where $h_n$ is a history up to step $n$.

Intuitively, lemma~\ref{app:zolawrmdp} states that as the length of run tends to~$+\infty$ and the agent gains more information about the history of the RMDP, the probability of satisfying parity converges to either 0 or 1.

\lemmaparity*

\begin{proof}
    For all environment policies $\envpol$ and all histories $h_n = s_0, a_0, \dots ,s_{n}$, we have
    \begin{equation}
        \label{eq:probbiggerthanm}
        \prob^{\agentpol,\envpol}_\rmdp(s_0)[\parity(c)|h_n] \geq m
    \end{equation}
    because the policy $\agentpol$ ensures a value of at least $m$ for the state $s_{n}$ and parity objective is independent of all finite prefixes.

    By Lemma~\ref{app:zolawrmdp}, for all environment policies $\envpol$, we have
    \begin{equation}
    \label{eq:probgoesto0or1}
        \prob^{\agentpol,\envpol}_\rmdp(s_0)[\parity(c)|h_n] \xrightarrow{n \to \infty} \{0, 1\}; \quad \text{a.s.}
    \end{equation}
    where $h = s_0, a_0, \dots ,s_{n}$ is the history up to step $n$. Therefore, 
    by Eq.~\ref{eq:probbiggerthanm} and Eq.~\ref{eq:probgoesto0or1}, we have
    \[
        \prob^{\agentpol,\envpol}_\rmdp(s_0)[\parity(c)|h_n] \xrightarrow{n \to \infty} 1; \quad \text{a.s.}
    \]
    which implies that, for all states $s$, we have  $\val^{s,\agentpol}_{\rmdp}\left(\parity(c)\right) = 1$ and completes the proof.

\end{proof}


\theoremparity*
\begin{proof}
We prove the result for the procedure $\asparity_\agent$ only. The proof for the procedure $\asparity_\env$ is analogous.

\textbf{(Correctness)} We prove the correctness of the algorithm by induction on the maximum priority $d$.

Induction Base Case. For $d = 0$, there is only one priority and any policy has value 1.

Induction Step Case.  We assume that maximum priority $d$ is even when the procedure $\asparity_\agent$ is called (if $d$ is odd, we increase it by 1). 
Assume that the procedure $\asparity_\env$ is correct for the maximum priority $d-1$, i.e., given an RMDP $\rmdp'$ with maximum priority $d-1$, for all states $s \in \asparity_\env(\rmdp', c, d-1)$,
the environment has a policy $\envpol^*$ such that for all agent policies $\agentpol$, $\prob^{\agentpol,\envpol}_{\rmdp'}(s)[\parity(c)] = 0$, or equivalently, $\val^{s, \agentpol}_{\rmdp'}\left(\parity(c)\right) = 0$. 

\textbf{Claim 1.} Assume the induction step hypothesis. Let $G_i$ be the set of states removed from $\rmdp$ in the $i$-th iteration of $\asparity_\agent(\rmdp, c, d)$. Then, there exists a policy for the environment $\envpol^*$ such that for all states $s \in G_i$ and agent policies $\agentpol$, we have 
    \[
        \prob^{\agentpol, \envpol^*}_{\rmdp}(s)\left[\parity(c)\right]<1.
    \]

Let $\rmdp_{i}$, $\rmdp'_{i}$, $B_i$, and $W^i_\env$ be the variables used in the $i$-th iteration of the algorithm. By the induction step hypothesis, for all states $s \in W^i_\env$, the environment has a policy $\envpol^*$ such that for all agent policies $\agentpol$, the probability of satisfying the parity in $\rmdp'_{i}$ is 0, i.e., $\prob^{\agentpol,\envpol^*}_{\rmdp'_{i}}(s)[\parity(c)] = 0$.  Notice that by the definition of $B_i$, for all states $s \in B_i$, the agent does not have any policy to leave $B_i$. Therefore, for all states $s \in W^i_\env$, the probability of satisfying the parity in $\rmdp_{i}$ is also 0.  
Moreover, by Lemma~\ref{lem:attr}, for all states $s \in \PA_\env(\rmdp_{i},W_\env^i)$, the environment has a policy that enforces to reach $W_\env^i$ with a non-zero probability. Hence, $\val^{s,\agentpol}_{\rmdp_i}\left(\parity(c)\right)<1$ for all agent policies $\agentpol$. Note that states in $\PA_\env(\rmdp_{i},W_\env^i)$ are not desirable for the agent since their value is less than 1, and the environment cannot enforce to reach them from the rest of the RMDP. Thus, removing $\PA_\env(\rmdp_{i},W_\env^i)$ from the RMDP does not change the values of other states. Consequently, we have $\val^{s,\agentpol}_{\rmdp}\left(\parity(c)\right)<1$, which completes the proof of the claim. 

\textbf{Claim 2.}
    Assuming the induction step hypothesis, for all the states $s \in \asparity_\agent(\rmdp, c, d)$, the agent has a policy $\agentpol^*$ such that $\val^{s, \agentpol}_{\rmdp}[\parity(c)] = 1$.

    We proceed with the proof by constructing the agent policy which satisfies the parity almost-surely. Let $\states^* = \asparity_\agent(\rmdp, c, d)$. We define 
    $\rmdp^*= (\states^*, \actions^*, \uncert^*)=\rmdp_{|\states^*}$, and $B = \PA_\agent(\rmdp^*, \states_d^*)$. We define the policy  for the agent $\agentpol^*$ as follows. (i) from $V \setminus \states_d^*$, follow the policy derived from positive attractor to $\states^*_d$ (Lemma~\ref{lem:attr}); (ii) from $\states_d^*$ stay in $\rmdp^*$ (such action always exists otherwise this state would be removed in previous iterations), and (iii) from $\states^* \setminus V$ play the policy obtained from the recursive call $W_\env = \asparity_\env(\rmdp'^*, c, d') \equiv \varnothing$. Note that the policy $\agentpol^*$ is pure and memoryless. We claim that the policy $\agentpol^*$ is a policy with value 1 for the agent. Note that with this policy, environment cannot enforce to leave $\states^*$. For every environment policy $\envpol$ and state $s \in \states^*$:
    \begin{align*}
        \prob&^{\agentpol^*,\envpol}_\rmdp(s)[\parity(c)] =  \prob^{\agentpol^*,\envpol}_{\rmdp^*}(s)[\parity(c)] \\
        &=\prob^{\agentpol^*,\envpol}_{\rmdp^*}(s)[\parity(c) | \buchi(V)].\prob^{\agentpol^*,\envpol}_{\rmdp^*}(s)[\buchi(V)]\\
        &+\prob^{\agentpol^*,\envpol}_{\rmdp^*}(s)[\parity(c) | \neg \buchi(V)].\prob^{\agentpol^*,\envpol}_{\rmdp^*}(s)[\neg \buchi(V)] \\
        &=1.\prob^{\agentpol^*,\envpol}_{\rmdp^*}(s)[\buchi(V)]\\ &+\prob^{\agentpol^*,\envpol}_{\rmdp^*}(s)[\parity(c) | \neg \buchi(V)].\prob^{\agentpol^*,\envpol}_{\rmdp^*}(s)[\neg \buchi(V)]\\
        &=1.\prob^{\agentpol^*,\envpol}_{\rmdp^*}(s)[\buchi(V)] +1.\prob^{\agentpol^*,\envpol}_{\rmdp^*}(s)[\neg \buchi(V)]\\
        &=1\,.
    \end{align*}
    where $\buchi(V)$ is the set of all runs that visit the set $V$ infinitely often. First equality follows from law of total probability. The second equality follows from  $\prob^{\agentpol^*,\envpol}_{\rmdp^*}(s)[\parity(c) | \buchi(V)] = 1$ because if a run visits the set $V$ infinitely often, it visits the even maximum priority $d$ infinitely often almost-surely as well, due to Lemma~\ref{lem:attr}. The third inequality follows from $\prob^{\agentpol^*,\envpol}_{\rmdp^*}(s)[\parity(c) | \neg \buchi(V)] > 0$, by the induction hypothesis, and $\prob^{\agentpol^*,\envpol}_{\rmdp^*}(s)[\parity(c) | \neg \buchi(V)] = 1$ by Lemma~\ref{lem:parityisas}.

    Recall $\val^{s_0,\agentpol}_\rmdp(\parity(c)) = \inf_{\envpol} \prob^{\agentpol,\envpol}_\rmdp(s_0)[\parity(c)]$, so, $\val^{s,\agentpol^*}_\rmdp(\parity(c))=1$ for all $s \in \states^*$, which completes the proof of the claim.

Claim 1 and Claim 2 imply the induction step, which completes the proof of correctness.

\textbf{(Oracle Complexity)} Each loop iteration removes at least one state from $\rmdp$. Therefore, the algorithm terminates after at most $|\states|$ iterations. Let $F(d)$ define the number of oracle calls of the algorithm with maximum priority $d$. Observe that $F(0) = 0$ and $F(d) = |\states| \times (2|\states|^2 + F(d-1))$. Therefore, for all $d\geq 0$, the algorithm performs $F(d) \leq 2\sum_{i=3}^{d+2}|\states|^i = O(|\states|^{d+2})$ oracle calls.

\textbf{(Almost-Sure vs Limit-Sure)} In the proof of Claim 2, the constructed policy $\agentpol^*$ is pure and memoryless and ensures that for all states $s$ such that $\val^{s}_{\rmdp}(\parity(c)) = 1$, we have $\val^{s, \agentpol^*}_{\rmdp}(\parity(c)) = 1$, which completes the proof.

\end{proof}

\section{Efficient Parity Algorithm}
\label{app:efficient-parity}
In this section, we present an algorithm for solving almost-sure parity with a quasi-polynomial upper bound over oracle calls, inspired by the algorithm of~\cite{Parys19}. Recall the algorithm $\asparity_\agent$ that is presented in main body. This algorithm is an iterative and recursive procedure, where in each step, it removes a subset of states from the RMDP. Since every state is removed at most once from the RMDP, there is at most one step where the size of $W_\env$ (the environment winning set) is larger than $\lfloor |\states|/2 \rfloor$ (half of the number of states). We leverage this observation in our improved algorithm $\easparity_\agent$. We add two new parameters $\size_\agent$ and $\size_\env$. The parameter $\size_\agent$ (resp. $\size_\env$) is an upper bound of the output size for the procedure $\easparity_\agent$ (resp. $\easparity_\env$). We allow the procedures to return empty sets if the output size is larger than the given upper bound. The formal outline of the algorithm for solving almost-sure parity for the agent and the environment is illustrated in Algorithm~\ref{alg:parity} and Algorithm~\ref{alg:parity-env}, respectively. 

\theoremeffparity*

\begin{proof}
    \textbf{(Correctness)} The algorithm $\asparity_\agent$, presented in the main body, is an iterative and recursive procedure, where in each step, it removes a subset of states from the RMDP. Since every state is removed at most once from the RMDP, there is at most one step where the size of $W_\env$ (the environment winning set) is larger than $\lfloor |\states|/2 \rfloor$ (half of the number of states). Since the algorithm $\easparity$ is the algorithm $\asparity$ with upper bounds on the output size, the correctness of algorithm $\easparity$ follows from Theorem~\ref{thm:parity}.

    \textbf{(Oracle Complexity)} Each loop iteration removes at least one state from $\rmdp$. Therefore, the algorithm terminates after at most $|\states|$ iterations. Let $F(d, l)$ define the number of oracle calls of the algorithm with maximum priority $d$ and $l = \lfloor \log_2 (\size_\agent + 1) \rfloor + \lfloor \log_2 (\size_\env + 1) \rfloor$. Observe that $F(0, l) = F(d, 0) = 0$. For all $d, l \geq 1$, we have
    \begin{align}
        F(d, l) &= |\states|(2|\states|^2 + F(d-1, l-1)) + 2|\states|^2 + F(d-1, l)\nonumber\\
        &\le 4|\states|^3 + |\states|F(d-1, l-1) + F(d-1, l) \,.
        \label{eq:recursive-ineq}
    \end{align}
    We now prove, by induction, that $F(d, l) \le 4n^{l+2} \binom{d + l}{l} - 4n^2$. For $d = 0$ or $l = 0$, the inequality holds. For $d, l \geq 1$, we have

    \begin{align*}
        F(d, l) &\leq 4|\states|^3 + |\states|F(d-1, l-1) + F(d-1, l)\\
        &\leq 4|\states|^3 + |\states|^{l+2}\binom{d+l-2}{l-1} - 4|\states|^3\\
        &+ |\states|^{l+2}\binom{d+l-1}{l} - 4|\states|^2\\
        &= |\states|^{l+2} \left( \binom{d+l-2}{l-1} + \binom{d+l-1}{l} \right ) - 4|\states|^2\\
        &\leq |\states|^{l+2} \binom{d+l}{l} - 4|\states|^2\,.
    \end{align*}
    The first inequality follows from Eq.~\ref{eq:recursive-ineq}. The second inequality follows from replacing $F$ with the induction hypothesis. The first equality follows from rearranging the terms. The third inequality follows from $\binom{d+l-2}{l-1} \leq \binom{d+l-1}{l-1}$ and Pascal's rule. Therefore, the induction hypothesis is proven. Consequently, the number of oracle calls when the procedure $\easparity(\rmdp, c, d, |\states|, |\states|)$ is called is 
    \[
    O \left (|\states|^{2\log_2 d} \binom{d + 2\log_2 d}{2\log_2 d} \right ) = |\states|^{O(\log_2 d)}\,,
    \]
    which completes the proof.
\end{proof}

\begin{algorithm}
   \caption{$\easparity_\agent(\rmdp,c, d, \size_\agent, \size_\env)$ Efficient Almost-Sure Parity for Agent}
   \label{alg:parity}
\begin{algorithmic}[1]
   \STATE {\bfseries Input:} RMDP $\rmdp=(\states,\actions, \uncert)$, priority function $c$, maximum priority $d$, maximum size of winning set for agent $\size_\agent$ and environment $\size_\env$.
   \IF{$\states \equiv \varnothing \lor \size_\agent \leq 0$}
   \STATE \textbf{return} $\varnothing$
   \ENDIF
   
   \textcolor{blue}{\texttt{// keep removing the environment winning sets of size at most $\lfloor \frac{\size_\env}{2} \rfloor$}}
   \REPEAT
   \STATE $\states_d = \{s \in \states | c(s) = d\}$   \COMMENT{maximum priority states}
   \STATE $B = \states \setminus \PA_\agent(\states_d)$   \COMMENT{cannot visit priority $d$}
   \STATE $\rmdp' = \rmdp_{|B}$
   \STATE $W_\env = \easparity_\env(\rmdp', c, d-1, \lfloor \frac{\size_\env}{2} \rfloor, \size_\agent)$ \COMMENT{solve for environment}
   \STATE $\rmdp = \rmdp \setminus \PA_\env(W_\env)$ \COMMENT{remove environment wining set}
   \UNTIL{$W_\env \equiv \varnothing$}
   
   \textcolor{blue}{\texttt{// remove at most one environment winning set of size larger than $\lfloor \frac{\size_\env}{2} \rfloor$}}
   \STATE $\states_d = \{s \in \states | c(s) = d\}$
   \STATE $B = \states \setminus \PA_\agent(\states_d)$
   \STATE $\rmdp' = \rmdp_{|B}$
   \STATE $W_\env = \easparity_\env(\rmdp', c, d-1, \size_\env, \size_\agent)$ \COMMENT{solve for environment}
   \STATE $\rmdp = \rmdp \setminus \PA_\env(W_\env)$
   
   \textcolor{blue}{\texttt{// keep removing the environment winning sets of size at most $\lfloor \frac{\size_\env}{2} \rfloor$}}
   \REPEAT
   \STATE $\states_d = \{s \in \states | c(s) = d\}$
   \STATE $B = \states \setminus \PA_\agent(\states_d)$
   \STATE $\rmdp' = \rmdp_{|B}$
   \STATE $W_\env = \easparity_\env(\rmdp', c, d-1, \lfloor \frac{\size_\env}{2} \rfloor, \size_\agent)$ \COMMENT{solve for environment}
   \STATE $\rmdp = \rmdp \setminus \PA_\env(W_\env)$
   \UNTIL{$W_\env \equiv \varnothing$}
   \STATE \textbf{return} $\states$
\end{algorithmic}
\end{algorithm}

\begin{algorithm}
   \caption{$\easparity_\env(\rmdp,c, d, \size_\env, \size_\agent)$ Efficient Almost-Sure Parity for Environment}
   \label{alg:parity-env}
\begin{algorithmic}[1]
   \STATE {\bfseries Input:} RMDP $\rmdp=(\states,\actions, \uncert)$, priority function $c$, maximum priority $d$, maximum size of winning set for agent $\size_\agent$ and environment $\size_\env$.
   \IF{$\states \equiv \varnothing \lor \size_\env \leq 0$}
   \STATE \textbf{return} $\varnothing$
   \ENDIF
   
   \textcolor{blue}{\texttt{// keep removing the agent winning sets of size at most $\lfloor \frac{\size_\agent}{2} \rfloor$}}
   \REPEAT
   \STATE $\states_d = \{s \in \states | c(s) = d\}$   \COMMENT{maximum priority states}
   \STATE $B = \states \setminus \PA_\env(\states_d)$   \COMMENT{cannot visit priority $d$}
   \STATE $\rmdp' = \rmdp_{|B}$
   \STATE $W_\agent = \easparity_\agent(\rmdp', c, d-1, \lfloor \frac{\size_\agent}{2} \rfloor, \size_\env)$ \COMMENT{solve for agent}
   \STATE $\rmdp = \rmdp \setminus \PA_\agent(W_\agent)$ \COMMENT{remove the agent wining set}
   \UNTIL{$W_\agent \equiv \varnothing$}
   
   \textcolor{blue}{\texttt{// remove at most one agent winning set of size larger that $\lfloor \frac{\size_\agent}{2} \rfloor$}}
   \STATE $\states_d = \{s \in \states | c(s) = d\}$
   \STATE $B = \states \setminus \PA_\env(\states_d)$
   \STATE $\rmdp' = \rmdp_{|B}$
   \STATE $W_\agent = \easparity_\agent(\rmdp', c, d-1, \size_\agent, \size_\env)$ 
   \STATE $\rmdp = \rmdp \setminus \PA_\agent(W_\agent)$
   
   \textcolor{blue}{\texttt{// keep removing the agent winning sets of size at most $\lfloor \frac{\size_\agent}{2} \rfloor$}}
   \REPEAT
   \STATE $\states_d = \{s \in \states | c(s) = d\}$
   \STATE $B = \states \setminus \PA_\env(\states_d)$
   \STATE $\rmdp' = \rmdp_{|B}$
   \STATE $W_\agent = \easparity_\agent(\rmdp', c, d-1, \lfloor\frac{\size_\agent}{2}\rfloor, \size_\env)$ 
   \STATE $\rmdp = \rmdp \setminus \PA_\agent(W_\agent)$
   \UNTIL{$W_\agent \equiv \varnothing$}
   \STATE \textbf{return} $\states$
\end{algorithmic}
\end{algorithm}

\end{document}